\definecolor{wacvblue}{rgb}{0.21,0.49,0.74}
\algrenewcommand\algorithmicrequire{\textbf{Input:}}
\algrenewcommand\algorithmicensure{\textbf{Output:}}
\Crefname{equation}{Eq.}{Eqs.}
\Crefname{appendix}{App.}{Apps.}
\Crefname{figure}{Fig.}{Figs.}
\Crefname{tabular}{Tab.}{Tabs.}
\Crefname{algorithm}{Alg.}{Algs.}
\DeclareMathOperator*{\argmin}{arg\,min}
\newcommand{\bm}{\boldsymbol}
\newcommand{\w}{\bm \omega}
\newcommand{\D}{\mathcal{D}}
\newcommand{\M}{\mathcal{M}}
\newcommand{\Se}{\mathcal{S}}
\newcommand{\N}{\mathcal{N}}
\newcommand{\A}{\mathcal{A}}
\newcommand{\ba}{\boldsymbol a}
\newcommand{\bc}{\boldsymbol c}
\newcommand{\btheta}{\boldsymbol \theta}
\definecolor{myblue}{RGB}{173, 216, 230}
\newcommand{\tikzxmark}{%
\tikz[scale=0.23] {
    \draw[line width=0.7,line cap=round] (0,0) to [bend left=6] (1,1);
    \draw[line width=0.7,line cap=round] (0.2,0.95) to [bend right=3] (0.8,0.05);
}}
\newcommand{\tikzcmark}{%
\tikz[scale=0.23] {
    \draw[line width=0.7,line cap=round] (0.25,0) to [bend left=10] (1,1);
    \draw[line width=0.8,line cap=round] (0,0.35) to [bend right=1] (0.23,0);
}}
\newcommand{\bftab}{\fontseries{b}\selectfont}
\newcommand{\maE}{\mathbb E}
\newtheorem{theorem}{Theorem}
\newtheorem{lemma}{Lemma}
\newtheorem{assumption}{Assumption}
\newtheorem{remark}{Remark}
\title{Guided Model Merging for Hybrid Data Learning: Leveraging Centralized Data to Refine Decentralized Models}
\author{
Junyi Zhu$^{1}$\thanks{Corresponding to \{junyi.zhu, savas.ozkan, m.ozay\}@samsung.com} \quad Ruicong Yao$^{2}$ \quad  Taha Ceritli$^{1}$ \quad Savas Ozkan$^{1}$\footnotemark[1] \quad  Matthew B. Blaschko$^{2}$ 
\\ 
\quad Eunchung Noh$^{3}$  \quad Jeongwon Min$^{3}$ \quad Cho Jung Min$^{3}$ \quad Mete Ozay$^{1}$\footnotemark[1] \\
$^1$Samsung R\&D Institute UK (SRUK) \quad
$^2$KU Leuven, Belgium \quad
$^3$Samsung Electronics Korea \\
}
\begin{document}
\maketitle
\begin{abstract}
Current network training paradigms primarily focus on either centralized or decentralized data regimes. However, in practice, data availability often exhibits a hybrid nature, where both regimes coexist. This hybrid setting presents new opportunities for model training, as the two regimes offer complementary trade-offs: decentralized data is abundant but subject to heterogeneity and communication constraints, while centralized data—though limited in volume and potentially unrepresentative—enables better curation and high-throughput access. Despite its potential, effectively combining these paradigms remains challenging, and few frameworks are tailored to hybrid data regimes. To address this, we propose a novel framework that constructs a model atlas from decentralized models and leverages centralized data to refine a global model within this structured space. The refined model is then used to reinitialize the decentralized models. Our method synergizes federated learning (to exploit decentralized data) and model merging (to utilize centralized data), enabling effective training under hybrid data availability. Theoretically, we show that our approach achieves faster convergence than methods relying solely on decentralized data, due to variance reduction in the merging process. Extensive experiments demonstrate that our framework consistently outperforms purely centralized, purely decentralized, and existing hybrid-adaptable methods. Notably, our method remains robust even when the centralized and decentralized data domains differ or when decentralized data contains noise, significantly broadening its applicability.
\end{abstract}
\vspace{-1.5em}
\section{Introduction}
\label{sec:intro}
\vspace{-.3em}

\begin{figure}[t]
\centering
\includegraphics[width=0.90\columnwidth]{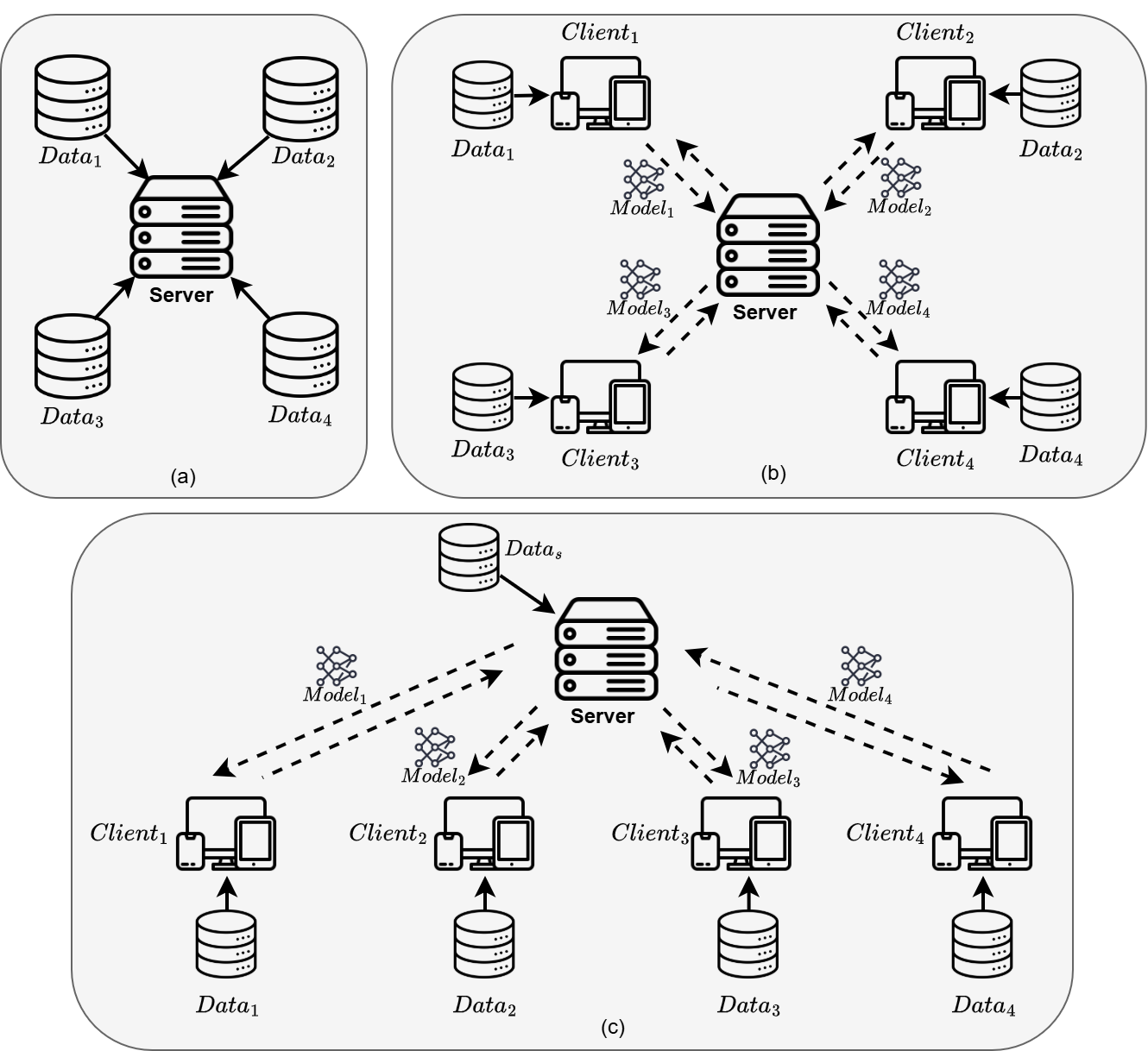}
\vspace{-1em}
\caption{Illustration of data regimes.
\textbf{(a)} Centralized regime: all data is aggregated at the server for training.
\textbf{(b)} Decentralized regime: data remains distributed across clients, which train local models and share updates with the server.
\textbf{(c)} Hybrid regime: decentralized learning is performed while a centralized dataset is concurrently available to assist the training process.
We follow standard FL terminology, referring to the central node as the \textit{server} and the distributed nodes as \textit{clients}.}

\label{fig:data_regime}
\end{figure}

Modern network training has been significantly advanced by centralized learning paradigm where data is aggregated in one location (see \cref{fig:data_regime}-a )~\citep{brown2020language,radford2021learning,rombach2022high,kirillov2023segment}. While centralized data offers possibility for data filtering and high-throughput access (using short-range high speed connection), creating comprehensive centralized datasets is often costly or even prohibitive. For example, using crowd-worker to make large-scale dataset could be expensive, while aggregating data from individual users is complicated by device limitations and user consent, or from medical centers by regulatory constraints. 
To overcome these limitations, learning from decentralized data—while retaining it on local devices—has emerged as a prominent research direction. A widely adopted approach is Federated Learning (FL), where clients (i.e., decentralized nodes) train models locally and transmit their updates to a central server for aggregation (see \cref{fig:data_regime}-b)\citep{fedavg}. However, decentralized data is inherently siloed and typically exhibits significant heterogeneity\citep{kairouz2021advances,scaffold,li2021model,gao2022feddc,wang_obj_inconsistency}. Moreover, clients in form of edge devices can be unresponsive or introduce delays in communication during training~\citep{xie2019asynchronous,nguyen2022federated,wang2023tackling}. These practical challenges often degrade the performance of decentralized learning relative to centralized training.

\vspace{+.3em}
\noindent \textbf{Hybrid Data Regime.} Rather than strictly adhering to fully centralized or fully decentralized approaches, we focus on a third, practically prevalent setting: the hybrid data regime (see \cref{fig:data_regime}-c). In this setting, the server possesses some data, while a substantial portion remains decentralized. This reflects a natural distribution of data across centralized and decentralized sources. For example, in 2024, an estimated 1.94 trillion photos were taken worldwide,\footnote{Source:  https://photutorial.com/photos-statistics/} yet the largest public image dataset—such as LAION-5B~\citep{schuhmann2022laion}—contains only several billion images.
In this paper, we explore how to leverage hybrid data effectively and highlight two key findings:
\begin{tcolorbox}[colback=gray!5, colframe=gray!80, boxrule=0.5pt, arc=2pt, left=5pt, right=5pt, top=5pt, bottom=5pt]
\textit{Finding 1}: A small amount of centralized data can guide a large quantity of scattered decentralized data to outperform methods that rely solely on either centralized or decentralized data—even when the decentralized data is noisy.

\textit{Finding 2}: Centralized data can effectively guide the learning of decentralized models even when the two come from different domains.
\end{tcolorbox}


\textcolor{wacvblue}{\textit{We emphasize that our work does not aim to encourage the explicit collection of hybrid data. Rather, we observe that the hybrid data regime naturally arises in practice, and our goal is to exploit such existing data availability.}}
Below, we categorize two types of data availability within the hybrid regime, along with representative practical scenarios:

\begin{itemize}[leftmargin=8pt,itemsep=0pt,topsep=0pt]
\item \textbf{In-Domain (ID) Data Availability}: The server holds data that aligns with the clients’ task domains. Practical sources include:
	\textbf{(1)}	\textit{Public datasets}: Existing public datasets that match the target task.
	\textbf{(2)}	\textit{Data curation}: The server operator (e.g., a company) may pay crowd workers to curate a task-specific dataset.
	\textbf{(3)}	\textit{Incentive mechanisms}: Clients may share a portion of their data with the server in exchange for incentives.
	\textbf{(4)}	\textit{Trusted Execution Environment (TEE)}: A TEE can be deployed on the server, allowing clients to securely transmit data that the server can manage but not access~\cite{tee}.

\item \textbf{Out-of-Domain (OOD) Data Availability}: The server holds data from domains different from those of the clients. In this case, public datasets—though unrelated to the clients’ tasks—can serve as centralized datas.
\end{itemize}

\vspace{+.5em}
\noindent \textbf{Challenges in Decentralized Learning.} In this work, we build upon the fundamental framework of federated learning (FL) to learn from decentralized data. \textcolor{wacvblue}{Our study considers the core challenges of \textit{data heterogeneity} encountered in FL, and the challenge of \textit{asynchronous communication}, where the server accepts delayed model updates.} Our communication setup is informed by stakeholder constraints, particularly targeting practical deployment on mobile platforms.
To address these challenges using hybrid data, we introduce \underline{Fed}erated \underline{D}ual \underline{Le}arning (\texttt{Feddle}), a framework that builds upon FL while \textit{enabling the server to optimize merging coefficients using either ID or OOD data}. \texttt{Feddle} allows the server to more accurately weigh client model updates—and, if beneficial, even assign negative weight. Theoretically, we show \texttt{Feddle} achieves faster convergence compared to existing methods. Empirically, it outperforms other hybrid approaches such as fine-tuning merged models on server-side data or training a separate server-side model and merging it with client models.

\noindent\textbf{Our contributions are summarized as follows}:
\begin{itemize}[leftmargin=10pt,itemsep=2pt,topsep=2pt]
\item \textbf{(1)} We formalize the concept of the hybrid data regime and demonstrate its potential to improve the utilization decentralized data.
\item \textbf{(2)} We introduce the model atlas to buffer communication fluctuations and define an efficient search space for server-side optimization.
\item \textbf{(3)} By leveraging a surrogate loss and a fallback mechanism, we show that OOD data can be effectively utilized at the server, thereby broadening the applicability of our framework.
\item \textbf{(4)} Extensive experiments demonstrate that \texttt{Feddle} consistently outperforms baseline methods under both ID and OOD data availability.
\item \textbf{(5)} We provide theoretical analysis showing that \texttt{Feddle} achieves a faster convergence rate compared to existing methods.
\end{itemize}

\vspace{-.5em}
\section{Related Work}
\label{sec:bg:rw}
\vspace{-.5em}
\noindent\textbf{Learning from Decentralized Data.}
Federated Learning (FL) has emerged as a prominent framework for learning from decentralized data~\citep{fedavg}. A core challenge in FL is \textit{data heterogeneity}\citep{kairouz2021advances,fedprox}, which leads to optimization difficulties and degraded convergence\citep{wang_obj_inconsistency,Li2020On}. To address this, various strategies have been proposed, including Bayesian modeling~\citep{chen2021fedbe,yurochkin2019bayesian,zhang_pfedbayes,Zhu_2023_CVPR}, variance reduction~\citep{scaffold}, contrastive learning on shared representations~\citep{li2021model}, and client clustering~\citep{ifca}.
Another line of research focuses on \textit{personalized FL (PFL)}, which trains client-specific models while regularizing them through shared information~\citep{zhang2021parameterized,song2022personalized,nikoloutsopoulos2022,dinh_pfedme}. However, PFL methods prioritize individual client performance and typically do not aim to learn a unified global model that generalizes across the full data distribution.
\textit{Communication delay} is another major obstacle in FL. Some works mitigate this by reducing the local training workload for slow clients~\citep{zhang2023timelyfl} or by discarding stale updates that exceed a delay threshold~\citep{liu2024fedasmu}. While effective to some extent, these strategies often sacrifice data coverage or model diversity. Tier-based architectures~\citep{chai2020tifl,9910131} and adaptive client sampling~\citep{chen2022optimal,qi2023fedsampling} offer alternative solutions, but introduce additional system complexity and may compromise training stability in practice.
In contrast, we adopt a line of work that embraces \textit{asynchronous communication}, allowing the server to incorporate delayed updates without strict synchronization. This leads to simpler and more robust communication protocols~\citep{xie2019asynchronous,nguyen2022federated,wang2023tackling,wang2024fadas,leconte2024queuing}, which align well with edge-device deployment scenarios.

\noindent\textbf{Learning from Hybrid Data.}
Some prior studies have explored scenarios in which the server computes model updates on behalf of clients that lack sufficient computational resources and choose to upload their data~\citep{elbir2021hybrid,ni2023semi,feng2023hybrid}. These approaches rely exclusively on ID data at the server, which limits their applicability.
Beyond that, several works leverage either ID or OOD server-side data to distill knowledge from clients into the global model~\citep{li2019fedmd,lin2020ensemble,yang2023fedfed}. However, these methods do not account for asynchronous communication—client updates are treated uniformly, irrespective of communication delays—making them less suited for real-world deployment.
Most closely related to our work, \citet{yueqioptimizing} also proposes optimizing merging coefficients. However, their method restricts coefficients to be strictly positive, whereas we show that allowing negative values can improve global model quality. Moreover, their approach assumes access to ID server data, while our framework can also leverage OOD data, enabling a wider range of application scenarios.

\vspace{-.5em}
\section{Problem Statement}
\vspace{-.5em}
\label{sec:bg}
In this section, we provide an overview of the background and key challenges associated with learning from decentralized data under our asynchronous communication setup.

\noindent\textbf{Federated Learning from Decentralized Data.}
FL assumes that the dataset $\D$ is fully partitioned across $J$ clients, such that $\D = \{\D_j\}_{j=1}^J$, where $j \in \{1,\ldots,J\}$ indexes the clients. A central server coordinates the clients to perform local training on their respective datasets and aggregates their model updates to form a global model. This process is repeated over $K$ communication rounds to optimize the global model toward the population distribution $p(\D)$.

At the beginning of each round $k$, the server broadcasts the current global model $\w^k$ to all clients. Each client $j$ then initializes its local model and performs local optimization:
\begin{equation}
\vspace{-.5em}
\label{eq:dfef}
\nonumber
\w_j = \w^k;\quad  \w_j^{k} = \argmin_{\w_j}\ell(\D_j, \w_j), \quad \forall j = 1, \ldots, J,
\end{equation}
where $\ell$ denotes the task-specific loss function. After local training, client $j$ computes its model update $\Delta \w_j^k = \w_j^k - \w^k$ and sends it to the server. The server then aggregates the updates ${\Delta \w_j^k}{j=1}^J$ using a predefined merging function $\M$ to obtain the next global model:
\begin{equation}
\label{eq:nvnv}
\w^{k+1} = \M(\Delta \w{1:J}^k, \w^k),
\end{equation}
e.g.\
$\M_{\text{\tiny FedAvg}}(\Delta \w_{1:J}^k, \w^k) = \w^k +
\sum_{j=1}^J\frac{|\D_j|}{|\D|} \Delta \w_j^{k}$~\citep{fedavg}.

\begin{figure}[t!]
    \centering
    \begin{subfigure}[t]{.48\columnwidth}
        \centering
        \includegraphics[width=\linewidth]{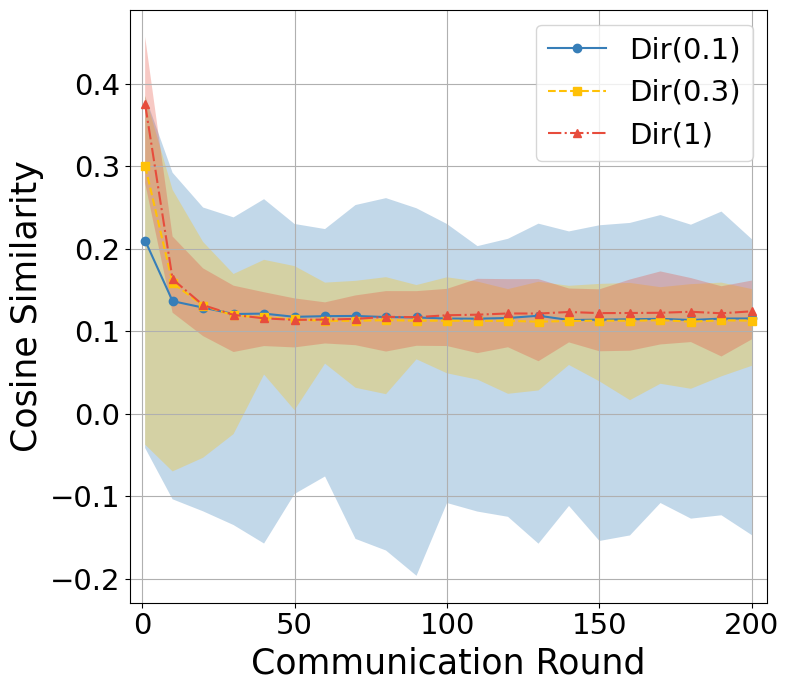}
        \caption{Cosine similarity between individual and true model updates.}
        \label{fig:mot:cosine}
    \end{subfigure}
    \hfill
    \begin{subfigure}[t]{.48\columnwidth}
        \centering
        \includegraphics[width=\linewidth]{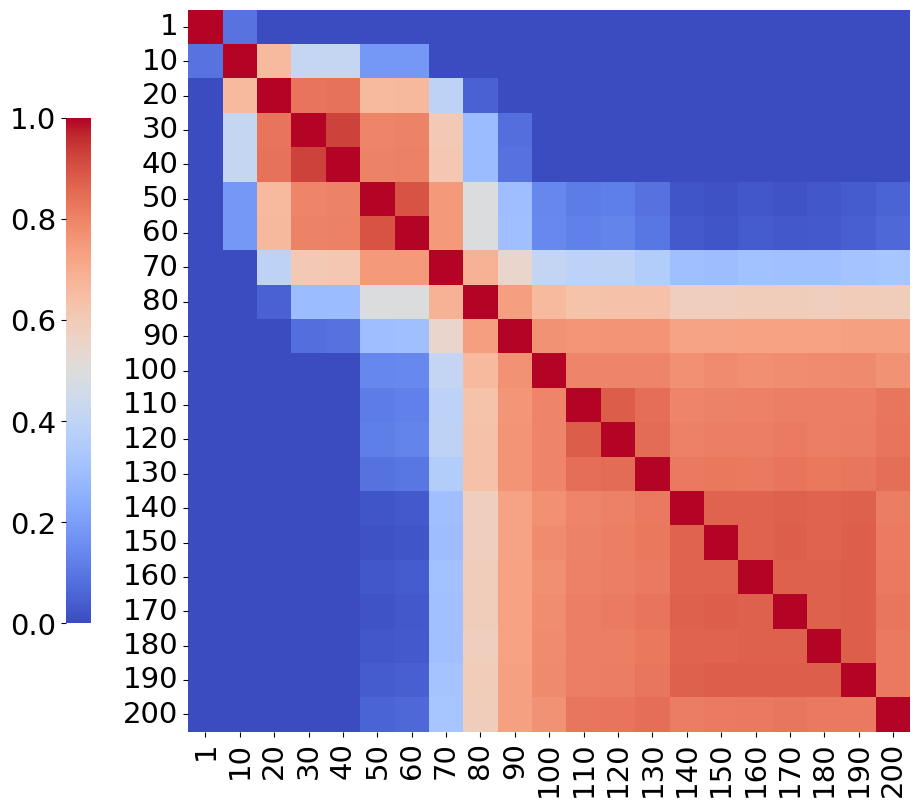}
        \caption{Cosine similarity between true model updates at different rounds; data is distributed w.r.t. Dir(0.1).}
        \label{fig:mot:corr1}
    \end{subfigure}%
\caption{Statistics of model updates in FL under varying degrees of data heterogeneity simulated using Dirichlet distribution (denoted as Dir$(\cdot)$) following previous work~\cite{yurochkin2019bayesian}. Subplot (a) displays mean values, with bands representing max.\ and min.\ values.}
\label{fig:motivation}
\end{figure}

\noindent\textbf{Heterogeneous Data Distribution.}
In practice, clients are often geographically distributed or operate in diverse environments, leading to non-identically distributed (non-IID) local datasets~\cite{kairouz2021advances}. As a result, even when clients share the same model initialization or prior knowledge, optimizing the data likelihood $p(\D_i \mid \w_i)$ for client $i$ and $p(\D_j \mid \w_j)$ for client $j$ leads to distinct posterior distributions $p(\w_i \mid \D_i)$ and $p(\w_j \mid \D_j)$. Consequently, the resulting model updates $\Delta \w_1, \ldots, \Delta \w_J$ also diverge~\cite{wang_obj_inconsistency,chen2021fedbe,zhang_pfedbayes,yurochkin2019bayesian,Zhu_2023_CVPR}.
As shown in \cref{fig:mot:cosine}, under strong heterogeneity—such as that induced by a Dirichlet distribution with concentration parameter $\alpha = 0.1$—clients may disagree on the optimization direction. In extreme cases, some client updates may even point opposite to the true global update, defined as $\Delta \w := \sum_j \frac{|\D_j|}{|\D|} \Delta \w_j$. This suggests that assigning \textit{uniformly positive} aggregation coefficients to all clients is suboptimal, despite its prevalence in existing FL methods.
\begin{figure*}[t]
\centering
\begin{minipage}[t]{0.30\textwidth}
    \centering
    \includegraphics[width=\textwidth]{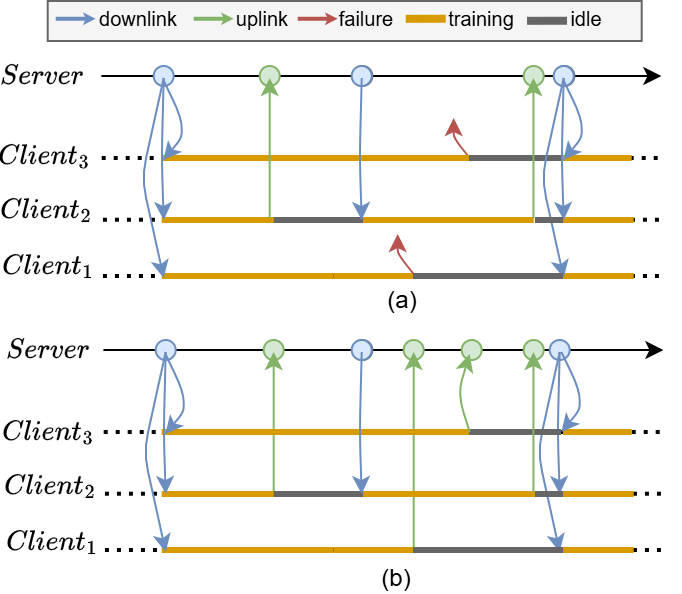}
    \caption{Illustration of synchronous (a) and asynchronous (b) communication in FL. Downlink is simplified for clarity.}
    \label{fig:async}
\end{minipage}
\hfill
\begin{minipage}[t]{0.64\textwidth}
    \centering
    \includegraphics[width=\textwidth]{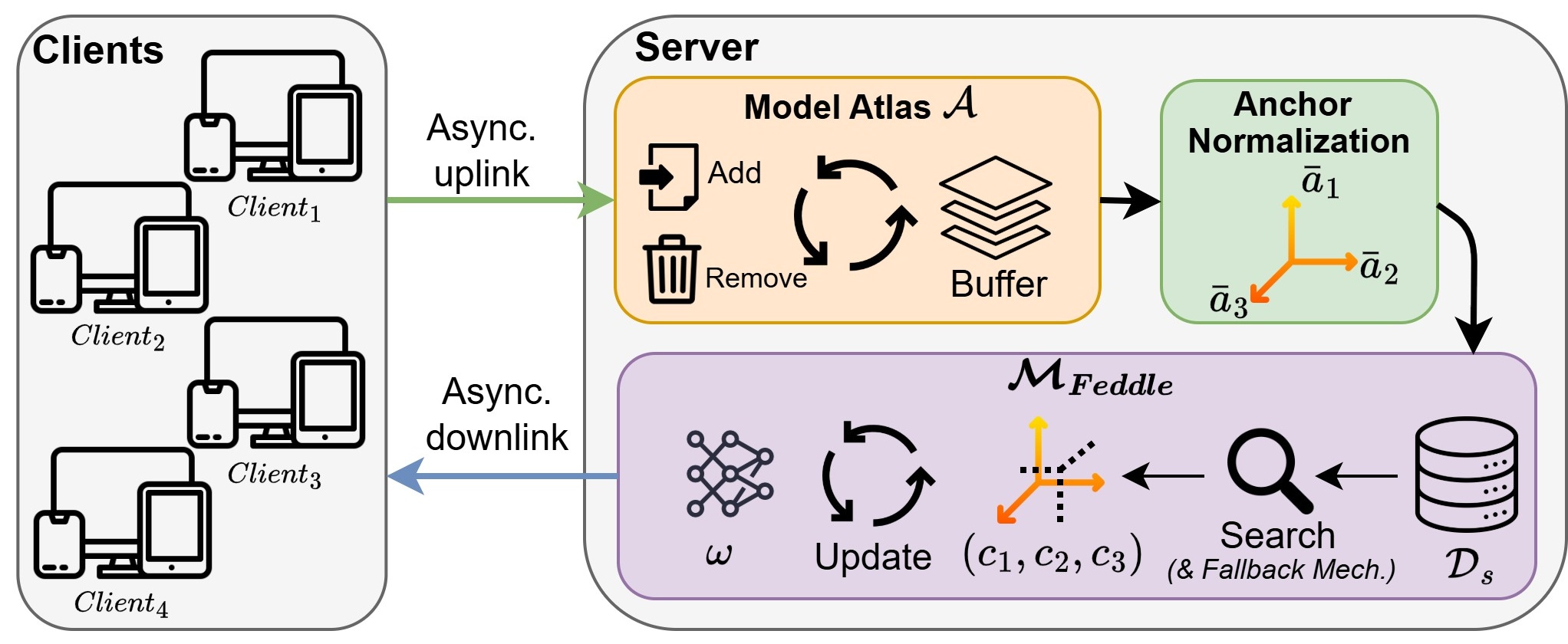}
    \caption{Overview of the \texttt{Feddle} framework. The server coordinates clients' local training using an asynchronous mechanism. Model atlas is updated by clients' model updates, which is then used to conduct coefficient search for the global model optimization.}
    \label{fig:arch}
\end{minipage}
\vspace{-1.em}
\end{figure*}

\noindent\textbf{Asynchronous Communication.}
\Cref{eq:nvnv,eq:dfef} implicitly assume that all clients coordinate and communicate with the server simultaneously—a requirement that is often impractical in real-world settings due to communication constraints. A common alternative~\citep{fedavg,bonawitz2019,wu2022communication,li2021model} is to adopt a \textit{synchronous} communication mechanism, where the server aggregates model updates after receiving results from a fixed number $N$ of clients, discarding any updates that arrive late (see~\cref{fig:async}):
\begin{equation}
\vspace{-.8em}
\label{eq:ffff}
\w^{k+1} = \M({\Delta\w_{ j_n}^{ k}}{n=1}^N, \w^k),
\end{equation}
where $\{j_n\}_{n=1}^N$ are the first $N$ clients to report in round $k$.
However, this approach suffers from several drawbacks:
\textbf{1)} The server must still wait for the slowest among the selected $N$ clients to respond;
\textbf{2)} many clients complete training but have their updates discarded, leading to wasted computation and energy;
\textbf{3)} some clients may consistently fail to report in time, biasing the global model toward a subset of the data distribution.
To mitigate these issues, a growing body of work~\citep{xie2019asynchronous,nguyen2022federated,wang2023tackling,wang2024fadas} adopts \textit{asynchronous communication}, where the server incorporates model updates as they arrive, regardless of delay (see \cref{fig:async}):
\begin{equation}
\vspace{-.8em}
\label{eq:aaaa}
\w^{k+1} = \M({\Delta\w_{ j_n}^{ k_n}}_{n=1}^N, \w^k),
\end{equation}
where $k_1, \ldots, k_N$ denote the downlink rounds when clients $j_1, \ldots, j_N$ received the global model (cf.\ \cref{eq:dfef}).
However, asynchronous communication introduces its own challenge: delayed updates may be stale, further compounding the optimization misalignment already caused by data heterogeneity (see \cref{fig:mot:corr1}).

\vspace{-.5em}
\section{Federated Dual Learning of Hybrid Data}
\vspace{-.5em}
\label{sec:method}

As discussed in \cref{sec:bg}, data heterogeneity leads to misaligned model updates, a problem further exacerbated by asynchronous communication. In the hybrid data setting, the server has access to a centralized dataset, which opens the door to data-guided aggregation strategies. Such strategies have shown promising results in the model merging literature, particularly when merging models fine-tuned from a shared pre-trained initialization~\citep{liu2024linear,akiba2025evolutionary,liu2024efficient}.
Motivated by this, we propose to leverage server-side data to optimize the merging coefficients used for aggregating client model updates. Unlike existing model merging approaches, however, our setting requires maintaining a buffer that caches asynchronously received model updates. This also necessitates mechanisms for identifying and removing low-quality updates in the buffer to prevent degradation of global model. Moreover, we demonstrate that even OOD server data can effectively guide the coefficient optimization process, making our method broadly applicable across diverse scenarios.

We introduce our proposed framework, \textit{Federated Dual Learning} (\texttt{Feddle}), in \cref{sec:method:arch}. A theoretical analysis of its convergence behavior is presented in \cref{sec:method:analysis}. \textit{An overview of the framework is illustrated in \cref{fig:arch}, and algorithmic details are provided in \cref{alg:server} (see \cref{app:alg}).}

\vspace{-.2em}
\subsection{Framework Architecture}
\label{sec:method:arch}
\vspace{-.2em}
In \cref{sec:method:atlas}, we describe the construction of model atlas $\A$, which consists of anchors $\{\ba_m\}_{m=1}^{|\A|}$ that identify the search space for the server to determine the corresponding merging coefficients $\{c_m\}_{m=1}^{|\A|}$. We then formulate the objective function for optimizing the merging coefficients under two distinct data availability scenarios in \cref{sec:method:obj}. In \cref{sec:method:fallback}, we introduce the fallback mechanism, which enhances the robustness of \texttt{Feddle} and has been observed to be crucial for successful learning with OOD data. We discuss computation efficiency in \cref{sec:computation}.

\vspace{-0.em}
\subsubsection{Model Atlas}
\vspace{-.5em}
\label{sec:method:atlas}
In \texttt{Feddle}, we introduce a model atlas $\A$ that defines the optimization space of the server. The atlas consists of maximum $M$ anchors $\A = \{\ba_m\}_{m=1}^{|\A|}, |\A| \leq M$, each representing an optimization direction. By utilizing client model updates as anchor points, the server can
optimize the global model in a subspace that has been explored by the clients, making it potentially more efficient. Notably, we find that setting $M$ to a relatively small value such as 20 is sufficient to effectively update a large global model with $10^7$
parameters. This flexibility in choosing $M$ allows the server to optimize the global model even when only limited data is available, mitigating the risk of overfitting.

\vspace{+0.3em}
\noindent\textbf{Addition and Removal of Anchors.}
When receiving a model update $\Delta \w_j$ from the $j$-th client, we add it as an anchor to the atlas. Initially, when the atlas is not full, we assign the next available index to the new anchor $\ba_{|\A|+1} := \Delta \w_j$. Once the atlas reaches its maximum size $M$, we remove an existing anchor to accommodate a new one. Instead of using a simple first-in-first-out (FIFO) strategy, we rank anchors based on their importance scores $\Se = \{s_m\}_{m=1}^M$ and remove the least important anchor $\ba_{m'}$, where $m' =\argmin_m \Se$. In \texttt{Feddle}, we use the absolute values $\{abs(c_m)\}_{m=1}^M$ of the aggregation coefficients $\{c_m\}_{m=1}^M$ found through the search as importance scores $s_m = abs(c_m), \forall m = 1,\ldots, M $, since they indicate how far the global model has moved in each direction.

\vspace{+0.3em}
\noindent\textbf{Anchor Normalization.}
The presence of data heterogeneity and delayed response from asynchronous communication, inevitably introduces variability in the magnitudes of the anchors (cf.\ \cref{fig:motivation}). This, in turn, can result in coefficients with disparate magnitudes, potentially leading to optimization challenges. To mitigate this issue, all anchors are normalized using the median of their $\ell_2$ norms before initiating the coefficient search at each round by:
\begin{equation}
    \label{eq:anc_normalization}
     \bar\ba_m =
\operatorname{median}(||\ba_1||,\ldots,||\ba_{|\A|}||)\cdot\frac{\ba_m}{||\ba_m||}, \quad \forall m.
\end{equation}
Note that we use $|\A|$ instead of $M$, since \texttt{Feddle} can perform coefficient search even before $|\A|$ reaches $M$.

\subsubsection{Search Objective}
\label{sec:method:obj}
\noindent\textbf{In-Domain Data Availability.}
When in-domain data $\D_S \sim \D$ is available, the server can perform a direct search using the loss function $\ell$ consistent with local training by
\begin{equation}
    \label{eq:kkkk}
    \{\hat c_m \}_{m=1}^{|\A|} = \argmin_{\boldsymbol c}\ell (\D_S, \w^k + \sum_{m=1}^{|\A|}c_m\bar\ba_m),
\end{equation}
where $\boldsymbol c=[c_1, \ldots, c_{|\A|}]$. In this work, we use the cross-entropy loss as our target task is multi-class classification. Once search is completed, the global model is updated by
\begin{equation}
\w^{k+1} = \w^k + \sum_{m=1}^{|\A|} \hat c_m\bar \ba_m.
\label{eq:updateWhat}
\end{equation}

\paragraph{Out-of-Domain Data Availability.}
When only OOD data $\D_S' \not\sim \D$ is available, we employ a surrogate loss function $h$ to shape the optimization landscape for the coefficient search. Ideally, $h(\D_S',\w)$ should exhibit a monotonic relation with $\ell(\D,\w)$, increasing and decreasing in tandem as $\w$ is updated by the coefficients $\bm c$. Namely, we require:

\vspace{-.5em}
\begin{equation}
    \label{eq:adfd}
     \langle\partial h(\D_S',\, \w)/\partial \boldsymbol c,\; \partial \ell(\D,\,\w)/\partial \boldsymbol c\rangle > 0.
\end{equation}

\noindent{\Cref{eq:adfd}} implies that the optimization direction of $h(\D_S',\w)$ aligns with that of $\ell(\D,\w)$. Interestingly, finding $h$ is \textbf{not} particularly challenging. For instance, if $\w$ performs well on $\D$, its feature extraction and representation capabilities should generalize to $\D_S'$.  Therefore, we can assess the quality of $\w$ by evaluating how well its representations of $\D_S’$ perform. Following this principle, we design a simple yet effective surrogate loss function, $h_{\btheta}$. We decompose the model $\w$ into two components: a body $\w_{bo}$ for representation extraction and a linear classifier head $\w_{he}$, with a similar decomposition for the anchors, $\ba_m = [\ba_{m}^{bo}; \ba_{m}^{he}]$. We utilize the representations generated by $\w_{bo}$ and replace the classifier head $\w_{he}$ with a new classifier head $\btheta$ to adapt to the labels of $\D_S’$. During training, we first optimize $\btheta$ to classify the labels of $\D_S’$ based on the representations extracted by $\w_{bo}$ (\cref{eq:popo}), and then search for the optimal coefficients of the anchors to update $\w_{bo}$ (\cref{eq:pqqq}):

\vspace{-1.em}
\begin{align}
    \label{eq:popo}
    \btheta^* &= \argmin_{\btheta}h(\D_S', [\w^k_{bo}+ \sum\nolimits_{m=1}^{|\A|}c_{m}\bar\ba_{m}^{bo}; \btheta]),\\
    \vspace{-.2em}
    \label{eq:pqqq}
    \hat{\boldsymbol c} &= \argmin_{\bc}h(\D_S', [\w^k_{bo}+ \sum\nolimits_{m=1}^{|\A|}c_m\bar\ba_{m}^{bo}; \btheta^*]),
\end{align}

\noindent{where} $\hat{\boldsymbol c}=[\hat c_1 , \ldots, \hat{c}_{|\A|}]$. Although $h_{\btheta^*}$ ignores $\w_{he}^k$ and $\{\ba_{m}^{he}\}_{m=1}^{|\A|}$ in \cref{eq:pqqq}, we find that the search results are a good indicator of the overall dimensions. Therefore, we update the full model via $\w^{k+1} = \w^k + \sum_{m=1}^{|\A|} \hat c_m\bar \ba_m $. Our experiments show that this approach works well even when $\D$ are dermoscopic images of skin lesions while $\D_S'$ is ImageNet with natural images. We note that the loss $h$ can potentially leverage unsupervised learning techniques \citep{caron2018deep,caron2020unsupervised}, allowing $\D_S’$ to consist of unlabeled data with richer resources. In this work, we focus on the supervised setting as a proof of concept.

\subsubsection{Fallback Mechanism}
\label{sec:method:fallback}
To enhance the robustness of \texttt{Feddle} and improve the chance of \cref{eq:adfd} being hold during the coefficient search, we introduce a fallback mechanism.
Specifically, we initialize the merging coefficients $\{c_m'\}_{m=1}^{|\A|}$ using an existing FL method and add a regularization term to the search objective. The resulting search objective becomes:
\vspace{-.5em}
\begin{equation}
    \label{eq:kiuy}
    \argmin_{\boldsymbol c}\ell (\D_S, \w^k + \sum_{m=1}^{|\A|}c_m\bar\ba_m) + \frac{\lambda}{2}
\sum_m^{|\A|} (c_m - c_m')^2,
\end{equation}

\noindent{where} $\lambda$ controls the regularization strength. In this work, we adopt \texttt{FedBuff} as the fallback method. However, \texttt{Feddle} can potentially leverage various FL methods as a fallback, which we leave for future exploration. As we will show in the experiment section, fallback mechanism is crucial for applying \texttt{Feddle} to the OOD setting.

\subsection{Theoretical Analysis}
\label{sec:method:analysis}
 Let ${F_j(\w) = \maE_{\D_j} [\ell(\D_j,\w)], \forall j=1,\ldots,J}$, we define the true loss on $\D$ as ${\maE_\D[\ell(\D,\w)]:=F(\w) = \frac{1}{J}\sum_{j=1}^J F_j(\w)}$. In the following, we show that due to the additional $\D_S$ and optimization of the merging coefficients, \texttt{Feddle} achieves a faster convergence rate in terms of communication rounds than existing methods (e.g. \texttt{FedAvg}, \texttt{FedBuff}) when ID data is available at the server. We further investigate the case when the server only has OOD data and show that \texttt{Feddle} remains convergent. Our theorems are based on the following assumptions which are widely used in the literature \citep{wang_obj_inconsistency,nguyen2022federated,wang2024fadas}. All the proofs are deferred to \cref{sec:app:proof}.
\begin{assumption}[Unbiased stochastic gradient]\label{ass:1}$\maE_{\xi_j}[g_j(\w;\xi_i)] = \nabla F_j(\w)$ for all $1\leq j\leq J$, where $\xi_i$ is the random variable for the noise and $g_j(\w;\xi_i)$ is stochastic gradient.
\end{assumption}
\begin{assumption}[Bounded local and global variance]\label{ass:2} For all $1\leq j\leq J$,
$$\maE_{\xi_j}[||g_j(\w;\xi_j)-\nabla F_j(\w)||^2] = \sigma^2_l(\w)\leq \sigma^2_l,$$ 
$$\frac{1}{J}\sum_{j=1}^J||\nabla F_j(\w) - \nabla F(\w)||^2 = \sigma^2_g(\w)\le\sigma^2_g,$$
{where $\sigma_l$ is the upper bound for the variance of the gradient due to the noise variable $\xi_i$ and $\sigma_g$ is the upper bound for the variance of the gradient due to heterogeneity.}
\end{assumption}
\begin{assumption}[Bounded gradient]\label{ass:3} $\exists {G \geq 0}, {||\nabla F_j||^2\leq G^2}$, for all $1\le j\le J$.
\end{assumption}
\begin{assumption}[L-smoothness]\label{ass:4} For all $1\le j\le J$,
    $$\exists L> 0,\; ||\nabla F_j(\w) - \nabla F_j(\w')||\le L||\w-\w'||.$$
    
\end{assumption}
\begin{theorem}[In-domain data]\label{thm:in-domain}
    Suppose the above assumptions hold, and $\D_S$ represents the in-domain data. In addition, suppose the client's delay is bounded by $\tau_{max}$, and Feddle's merging coefficients satisfies ${abs(\hat{c}_m)<\hat c_{max}}$. Then, Feddle at least has the same convergence rate as \texttt{FedBuff} and \texttt{FedAvg}, in $K$ global communication rounds, $Q$ local steps, and {$T$} server steps of training with the global step size (in \texttt{FedBuff} and \texttt{FedAvg}) ${\eta_g=\mathcal O(\sqrt{QM})}$, local step size $\eta_l = \mathcal O(1/\sqrt{K}Q)$, and server step size $\eta_c = \mathcal O(1/\sum_m^M||\Delta_m||^2)$, where $\Delta_m$ is model update of client $m$. Moreover, if the signal-to-noise ratio of the gradient is sufficiently large, i.e.\ ${C||\nabla F(\w)||^2\ge (\sigma^2_l(\w)+\sigma^2_g(\w))}$, for $C>0$, and for any delay $\tau\le\tau_{max}$,  there exists $C_{max}>0$ such that $C_{max}||\nabla F_j(\w^k)||^2\ge ||\nabla F_j(\w^{k-\tau})||^2$, then the convergence rate of Feddle $r_{Feddle}$ satisfies
    \begin{equation}\label{eq:feddle_rate}
        r_{Feddle}\le\frac{\sqrt{QMK}}{\sqrt{QMK} + C_T\left(K-\sqrt{\frac{M}{Q}}\right)}r_{FL},
    \end{equation}
    where $r_{FL}$ is the rate of \texttt{FedBuff} or \texttt{FedAvg}, ${C_T = A_0\left(1-\frac{1}{4^T}\right)}$, and $A_0$ is a constant decreasing in $C,C_{max},L$. Normally, $K\gg\sqrt{\frac{M}{Q}}$, thus \texttt{Feddle} has a faster convergence rate than the other methods.
\end{theorem}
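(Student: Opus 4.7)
The plan is to derive \texttt{Feddle}'s per-round progress as a sum of two contributions: the standard FL aggregation descent and an additional descent from the $T$ server-side optimization steps, and then combine them into a convergence rate. The starting point is a standard descent lemma: by \cref{ass:4}, $F(\w^{k+1}) - F(\w^k)$ can be upper-bounded by a linear term in $\nabla F(\w^k)^\top(\w^{k+1}-\w^k)$ plus a quadratic term $\tfrac{L}{2}\|\w^{k+1}-\w^k\|^2$. I would then decompose the \texttt{Feddle} update $\w^{k+1} = \w^k + \sum_m \hat c_m \bar\ba_m$ into a \texttt{FedBuff}-like initialization $\w^k + \sum_m c_m' \bar\ba_m$ (supplied by the fallback mechanism in \cref{eq:kiuy}) plus a deviation $\delta^k = \sum_m (\hat c_m - c_m')\bar\ba_m$ produced by the $T$ server-side gradient steps on $\ell(\D_S, \cdot)$.

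First, I would recover the FL baseline. Using \cref{ass:1,ass:2,ass:3} together with the bounded delay $\tau_{max}$, the first piece of the decomposition reproduces the existing \texttt{FedBuff}/\texttt{FedAvg} analysis, giving the baseline rate $r_{FL}$ with the prescribed step sizes $\eta_g = \mathcal O(\sqrt{QM})$ and $\eta_l = \mathcal O(1/\sqrt{K}Q)$. This already implies the first claim (``at least the same convergence rate''), since by the boundedness $|\hat c_m|<\hat c_{max}$ and the choice $\eta_c = \mathcal O(1/\sum_m\|\Delta_m\|^2)$, the deviation $\delta^k$ is small enough that its quadratic contribution does not inflate the baseline bound beyond its current order.

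Next, I would quantify the strict gain from the $T$ server-side steps. Under the signal-to-noise condition $C\|\nabla F(\w)\|^2 \ge \sigma_l^2(\w) + \sigma_g^2(\w)$, the in-domain gradient $\nabla_{\bc}\ell(\D_S,\cdot)$ aligns, on average, with a rescaling of $\nabla F(\w^k)$ projected onto the anchor subspace. With the anchor normalization in \cref{eq:anc_normalization} and the server step size $\eta_c$, each of the $T$ gradient steps behaves like a contraction on the relevant suboptimality with a constant factor bounded away from $1$; iterating $T$ times produces a factor of the form $1-1/4^T$, which is the origin of $C_T = A_0(1-1/4^T)$, with $A_0$ absorbing the constants $C$, $C_{max}$, $L$ and $\hat c_{max}$. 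The delay-stability assumption $C_{max}\|\nabla F_j(\w^k)\|^2 \ge \|\nabla F_j(\w^{k-\tau})\|^2$ is then used to transfer this per-round gain across stale anchors in the atlas without losing the constant factor.

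Finally, summing the per-round descent over the $K$ communication rounds and solving for the rate yields the quoted expression: the baseline contributes a $\sqrt{QMK}$ term and the server-side contribution adds $C_T(K-\sqrt{M/Q})$ to the denominator, recovering \cref{eq:feddle_rate}. Since $K \gg \sqrt{M/Q}$ in practice, this second term strictly improves the rate over $r_{FL}$. The main obstacle I anticipate is Step three: carefully controlling the $T$ stochastic gradient steps of the server on the surrogate $\ell(\D_S,\cdot)$ while the anchors $\bar\ba_m$ encode delayed client gradients, so that the signal-to-noise and delay-stability assumptions combine cleanly into a single contraction factor $C_T$ without spurious $\tau_{max}$-dependent blow-ups; keeping the boundedness of $|\hat c_m|$ consistent with the bound on $\delta^k$ throughout this argument is where most of the bookkeeping will live.
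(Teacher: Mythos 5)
Your high-level architecture matches the paper's: decompose the per-round progress into the \texttt{FedBuff}/\texttt{FedAvg} baseline descent plus an extra gain from the $T$ server steps, obtain a $\bigl(1-1/4^T\bigr)$ factor from iterating those steps, and sum over $K$ rounds. However, two of your key steps are justified by mechanisms that do not work under the stated assumptions, and the central quantitative lemma of the paper's proof is missing. First, the ``at least the same rate'' claim: you argue that the deviation $\delta^k$ from the fallback initialization is small enough not to inflate the baseline bound. The paper instead uses a containment argument: since the newly received updates are added to the atlas, there exist coefficients $\hat c_m$ with $\sum_m \hat c_m \ba_m = \tilde\w^{k+1}$ (the \texttt{FedBuff} update), so the minimizer over $\bc$ satisfies $F(\w^{k+1})\le F(\tilde\w^{k+1})$ directly, and one may then assume WLOG that the server optimization is initialized at that point. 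Your perturbation argument at best shows Feddle is not much worse, which is the wrong direction, and it leans on the fallback regularizer, which the ID analysis does not need.

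Second, and more seriously, you assert that each server step ``behaves like a contraction on the relevant suboptimality,'' iterated $T$ times to give $1-1/4^T$. Under only $L$-smoothness (no convexity or PL condition is assumed), no such contraction of suboptimality exists. The paper instead (i) lower-bounds the cumulative descent of $T$ SGD steps by a geometric sum of squared gradient norms, using that the gradient norm after $t$ steps is at least $(1-L'(1+C)\eta_{\bc})^t$ times its initial value, which with $\eta_{\bc}=\mathcal O(1/(L\sum_m\|\Delta_m\|^2))$ from \cref{lem:eta_c} yields the $\bigl(1-1/4^T\bigr)$ geometric factor; and (ii) — this is the heart of the argument and is absent from your proposal — relates the server-side gradient back to the true gradient via the chain rule $\nabla_{\bc}F(\tilde\w^{k+1}) = \bigl(\langle\nabla F(\tilde\w^{k+1}),\Delta_j\rangle\bigr)_j$, the inequality $J\sum_j a_j^2\ge(\sum_j a_j)^2$, and the \texttt{FedBuff}/FADAS inner-product lower bound on $|\langle\nabla F(\w^k),\tfrac{\eta_g}{J}\sum_j\Delta_j\rangle|$, to conclude $\|\nabla_{\bc}F(\tilde\w^{k+1})\|^2\gtrsim\|\nabla F(\w^k)\|^2\|\boldsymbol{\Delta}^k\|^2$. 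The signal-to-noise condition and the $C_{max}$ delay condition enter precisely here, to control the stale-gradient and variance terms in that lower bound; without this step there is no way to express the server-side gain as $-C_2\maE[\|\nabla F(\w^k)\|^2]$ and hence no way to combine it with the baseline term into \cref{eq:feddle_rate}.
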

\begin{remark}
    There are two assumptions in Theorem \ref{thm:in-domain} regarding the norm of the gradient. The first one asserts that the variance of the gradient can be bounded by some factor of the norm of the gradient. This is expected for reasonable training results and is also assumed in Assumption 4.3 of \cite{Bottou2016OptimizationMF}. The second assumption is technical where we would like to make the norm of the gradients comparable despite the delay. We note that the effect of $C_{max}$ is clarified in the constant $A_0$, where small $C_{max}$ provides smaller bounds, which is reasonable in practice.
\end{remark}

\begin{theorem}[Out-of-domain data]\label{thm:out-of-domain}
    Suppose the assumptions in Theorem \ref{thm:in-domain} hold except that ${\eta_{\boldsymbol{c}} = \min\left(\frac{1}{2LT||\boldsymbol{\Delta^k}||^2},\frac{1}{2LT||\boldsymbol{\Delta^k}||}\right)}$, where $\bm{\Delta^k}$ denotes all model updates at round $k$. In addition, the cosine similarity between $\partial h(\D'_S,\cdot)/\partial\boldsymbol{c}$ and $\partial \ell(\D,\cdot)/\partial\boldsymbol{c}$ is $s\approx1$, (c.f. Eq. (\ref{eq:adfd})), where $\D'_S$ is the OOD data. Then, if we choose the $\eta_{\boldsymbol{c}}'$ in the server training adaptively such that $\eta_{\boldsymbol{c}}'||\maE_{\D_S}[h(\D_S,\w^k)]|| = \eta_{\boldsymbol{c}}||\maE_{\D_S}[h(\D_S,\w^k)]||$ and let \texttt{FedBuff} initialize \texttt{Feddle} for the merging procedure, then it converges to a stable point of the true loss up to some error, 
    \begin{align}
        \nonumber
        r'_{Feddle}\le&\frac{\sqrt{QMK}}{\sqrt{QMK} + C_T'\left(K-\sqrt{\frac{M}{Q}}\right)}r_{FL} \\
        &+ \mathcal{O}\left((1-s)\frac{GK}{L}\right),
    \end{align}
    where $C_T'=\min\left\{\frac{A_0'\sqrt{K}}{T(\sigma_l + \sigma_g + G)},\frac{A_0'}{T}\right\}$ for some $A_0'>0$.
\end{theorem}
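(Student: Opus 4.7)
The plan is to extend the in-domain analysis of \cref{thm:in-domain} by tracking how much the surrogate-loss descent direction deviates from the true-loss descent direction, and then bounding the error this deviation injects per round. First, I would reuse the descent inequality that drives the proof of \cref{thm:in-domain}: at each round~$k$, the coefficient update $\Delta\bm c$ produces a global-model increment $\sum_m \Delta c_m \bar\ba_m$, and an $L$-smoothness expansion of $F$ around $\w^k$ gives a descent bound in terms of the inner product between this increment and $\nabla F(\w^k)$. In the in-domain case the increment is aligned with $-\partial \ell(\D,\w^k)/\partial\bm c$; in the OOD case it is aligned with $-\partial h(\D_S',\w^k)/\partial\bm c$, so the first job is to translate between these two directions.

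Second, I would exploit the cosine-similarity hypothesis $\cos\bigl(\partial h/\partial\bm c,\partial\ell/\partial\bm c\bigr)=s$ to decompose the OOD search direction into a component parallel to the true-loss gradient direction plus an orthogonal component. The adaptive choice $\eta_{\bm c}'\,\|\maE_{\D_S}[h(\D_S,\w^k)]\|=\eta_{\bm c}\,\|\maE_{\D_S}[h(\D_S,\w^k)]\|$ is precisely what rescales the OOD step so that the norm of its projection onto the true-gradient direction matches the step used in the in-domain analysis. Consequently, the parallel part of the update satisfies the same recursion as in \cref{thm:in-domain}, which yields the first term $\frac{\sqrt{QMK}}{\sqrt{QMK}+C_T'(K-\sqrt{M/Q})}\,r_{FL}$; the constant $C_T'$ differs from $C_T$ only because the admissible $\eta_{\bm c}'$ is now capped by the $\min$ in the theorem statement, which is what propagates into the definition of $C_T'$.

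Third, I would bound the deficit produced by the orthogonal component. Projecting onto the true gradient captures a fraction $s$ of the ideal descent, and the remaining $(1-s)$ fraction behaves as a perturbation on the descent inequality. Using $\eta_{\bm c}'=\mathcal{O}(1/L)$ together with the gradient bound $\|\nabla F_j\|\le G$ from \cref{ass:3}, each round contributes an error of order $(1-s)G/L$; summing over $K$ rounds yields the additive $\mathcal{O}\bigl((1-s)GK/L\bigr)$ term. The \texttt{FedBuff} initialization together with the regularizer in \cref{eq:kiuy} acts as a safety net: whenever the surrogate direction fails to dominate the perturbation, the proximal term pulls $\bm c$ back toward the \texttt{FedBuff} coefficients, so the effective step is never worse than a single \texttt{FedBuff} step and the $r_{FL}$ baseline in the first term is preserved.

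The main obstacle will be rigorously handling the interplay between the adaptive step size and the per-round perturbation: one has to show that $\eta_{\bm c}'$ is simultaneously small enough for the smoothness descent lemma to apply (which is why the $\min$ over $1/(2LT\|\bm\Delta^k\|^2)$ and $1/(2LT\|\bm\Delta^k\|)$ appears in the statement) and large enough that the parallel projection still delivers the \texttt{Feddle}-style acceleration captured by $C_T'$. A secondary subtlety is that $\partial h/\partial\bm c$ is evaluated at the head minimizer $\btheta^*$ from \cref{eq:popo}, so strictly speaking the cosine-similarity hypothesis must be transported to this implicitly defined iterate; I would treat $\btheta^*$ as an implicit function of $\bm c$ and argue, using boundedness of $\btheta^*$ and continuity of $h$, that this perturbation can be absorbed into the constant $A_0'$ without altering the stated rate.
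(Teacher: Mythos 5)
Your high-level skeleton matches the paper's: both arguments use the FedBuff initialization to anchor the OOD and ID optimizations at the same starting point, use the cosine similarity $s$ to bound a per-step deviation of order $(1-s)$ times a gradient norm, invoke the adaptive step size to turn that into a per-round error of order $(1-s)G/L$, and sum over $K$ rounds to get the additive $\mathcal{O}\bigl((1-s)GK/L\bigr)$ term while inheriting the Theorem~\ref{thm:in-domain} rate with a modified constant $C_T'$. The difference is in the mechanism, and it hides a genuine gap in your version.

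The paper does not decompose the surrogate direction into parallel and orthogonal components. It runs a coupling argument: it defines two trajectories $\w^k_{h^t}$ (server steps on the surrogate loss) and $\w^k_{f^t}$ (server steps on the true loss), sets $\w^k_{h^0}=\w^k_{f^0}$ via the FedBuff initialization, and recursively bounds the iterate distance
$\|\w^k_{h^t}-\w^k_{f^t}\|\le \eta_{\bm c}(1-s)G\|\bm\Delta^k\| + \bigl(1+L\|\bm\Delta^k\|^2\eta_{\bm c}\bigr)\|\w^k_{h^{t-1}}-\w^k_{f^{t-1}}\|$,
then converts to function values via $\|F(\w^k_{h^t})-F(\w^k_{f^t})\|\le G\|\w^k_{h^t}-\w^k_{f^t}\|$. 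Your claim that \emph{``the parallel part of the update satisfies the same recursion as in Theorem~\ref{thm:in-domain}''} is not directly true: after the first server step the two trajectories sit at different points, so even the component of $\partial h/\partial\bm c$ aligned with $\partial\ell/\partial\bm c$ is evaluated at a drifted iterate, and the deviation compounds geometrically through the factor $(1+L\|\bm\Delta^k\|^2\eta_{\bm c})^t$ over the $T$ inner server steps. This is precisely why the step size carries the $\tfrac{1}{T}$ factor in $\eta_{\bm c}=\min\bigl(\tfrac{1}{2LT\|\bm\Delta^k\|^2},\tfrac{1}{2LT\|\bm\Delta^k\|}\bigr)$ — it keeps $(1+L\|\bm\Delta^k\|^2\eta_{\bm c})^T$ bounded by a constant so the accumulated drift over one round stays $\mathcal{O}((1-s)G/L)$ — whereas you attribute the $\min$ only to the smoothness descent lemma. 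Without the explicit trajectory-distance recursion your per-round error bound does not follow. Two smaller points: the paper's proof uses only the FedBuff \emph{initialization}, not the proximal regularizer of \cref{eq:kiuy}, as its safety net, so that part of your plan is extraneous to this theorem; and the paper sidesteps the $\btheta^*$ subtlety you raise by working with the surrogate objective abstractly as $F^h$ and assuming the cosine-similarity condition directly on $\nabla_{\bm c}F^h$, so no implicit-function argument is needed.
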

\begin{remark}
    Here, the rate is slightly different from that on ID data because we need to choose $\eta_{\boldsymbol{c}}$ adaptively to bound the error term due to the surrogate loss. In \cref{fig:ood_signal} of \cref{app:res:loss}, we studied the constant $s$ and showed empirically that it is indeed close to 1. Therefore, \cref{thm:out-of-domain} theoretically confirms the convergence result up to a small error.
\end{remark}

\begin{table*}[t!]
\centering
\resizebox{\textwidth}{!}{
\begin{tabular}{lcccccccccc}
\toprule[1.5pt]
\multirow{2}{*}{Method} & \multirow{2}{*}{ID} & \multicolumn{4}{c}{ResNet18} & \multicolumn{4}{c}{ViT} \\
\cmidrule(lr){3-6} \cmidrule(lr){7-10}
& & \footnotesize Dir(0.1), $\N(20)$ & \footnotesize Dir(0.1), $\N(5)$ & \footnotesize Dir(0.3), $\N(20)$ & \footnotesize Dir(0.3), $\N(5)$ & \footnotesize Dir(0.1), $\N(20)$ & \footnotesize Dir(0.1), $\N(5)$ & \footnotesize Dir(0.3), $\N(20)$ & \footnotesize Dir(0.3), $\N(5)$ \\
\specialrule{0em}{1pt}{1pt}
\hline
\specialrule{0em}{1pt}{1pt}
 Center & \tikzcmark
  & \multicolumn{4}{c}{$53.2\pm1.2$} 
  & \multicolumn{4}{c}{$70.1\pm0.8$} \\
 Fed$+$FT & \tikzcmark
  & \underline{$58.7\pm0.4$} 
  & \underline{$64.8\pm0.4$} 
  & \underline{$59.6\pm0.3$} 
  & \underline{$63.3\pm0.6$} 
  & \underline{$85.9\pm0.3$} 
  & \underline{$88.8\pm0.2$} 
  & \underline{$86.1\pm0.1$} 
  & \underline{$89.3\pm0.1$}\\
 HFCL & \tikzcmark
  & \underline{$62.2\pm0.9$} 
  & \underline{$68.3\pm0.3$} 
  & \underline{$63.8\pm0.6$} 
  & \underline{$68.6\pm0.2$} 
  & \underline{$86.4\pm0.5$} 
  & \underline{$88.5\pm0.1$} 
  & \underline{$86.3\pm0.0$} 
  & \underline{$88.7\pm0.1$}\\
 FedDF-ID & \tikzcmark
  & \underline{$55.0\pm0.5$} 
  & \underline{$65.4\pm0.2$} 
  & \underline{$59.8\pm0.8$} 
  & \underline{$68.0\pm0.2$} 
  & $63.7\pm0.5$ 
  & \underline{$81.9\pm0.2$} 
  & $66.6\pm0.9$ 
  & \underline{$84.9\pm0.2$}\\
 Feddle-ID (ours) & \tikzcmark
  & \underline{\bftab 72.4 $\pm$ 2.1} 
  & \underline{\bftab 74.3 $\pm$ 0.5} 
  & \underline{\bftab 76.5 $\pm$ 0.3} 
  & \underline{\bftab 77.2 $\pm$ 0.0} 
  & \underline{\bftab 90.6 $\pm$ 0.0} 
  & \underline{\bftab 90.0 $\pm$ 0.2} 
  & \underline{\bftab 92.5 $\pm$ 0.4} 
  & \underline{\bftab 92.5 $\pm$ 0.0}\\
\arrayrulecolor{lightgray}
\midrule
\arrayrulecolor{black}
 FedAvg &  \tikzxmark
  & $52.6\pm0.9$ 
  & \underline{$65.0\pm0.6$} 
  & \underline{$57.4\pm1.4$} 
  & \underline{$68.5\pm0.2$} 
  & $49.9\pm1.1$ 
  & \underline{$78.2\pm0.5$} 
  & $48.4\pm0.9$ 
  & \underline{$80.8\pm0.1$}\\
 FedAsync & \tikzxmark
  & \underline{$58.0\pm1.3$} 
  & \underline{$66.9\pm0.1$} 
  & \underline{$62.4\pm0.8$} 
  & \underline{$71.0\pm0.3$} 
  & $66.7\pm1.8$ 
  & \underline{$83.8\pm0.3$} 
  & $69.8\pm0.8$ 
  & \underline{$86.7\pm0.2$}\\
 FedBuff &  \tikzxmark
  & \underline{$64.6\pm0.4$} 
  & \underline{$66.4\pm0.3$} 
  & \underline{$68.8\pm0.4$} 
  & \underline{$69.8\pm0.2$} 
  & \underline{$86.7\pm0.6$} 
  & \underline{$86.8\pm0.5$} 
  & \underline{$88.6\pm0.6$} 
  & \underline{$89.7\pm0.3$}\\
 CA2FL &  \tikzxmark
  & \underline{$66.0\pm1.2$} 
  & \underline{$67.3\pm0.2$} 
  & \underline{$69.5\pm0.4$} 
  & \underline{$70.1\pm0.1$} 
  & \underline{$87.3\pm0.7$} 
  & \underline{$87.8\pm0.3$} 
  & \underline{$89.2\pm0.9$} 
  & \underline{$89.5\pm0.1$}\\
 FedDF-OOD &  \tikzxmark
  & $24.0\pm0.4$ 
  & $30.4\pm1.3$ 
  & $28.5\pm0.9$ 
  & $35.6\pm1.7$ 
  & $50.1\pm1.1$ 
  & \underline{$78.8\pm0.4$} 
  & $48.9\pm0.9$ 
  & \underline{$81.3\pm2.9$}\\
 Feddle-OOD (ours) &   \tikzxmark
  & \underline{\bftab 70.5 $\pm$ 1.8} 
  & \underline{\bftab 72.8 $\pm$ 0.6} 
  & \underline{\bftab 74.9 $\pm$ 0.8} 
  & \underline{\bftab 75.9 $\pm$ 0.0} 
  & \underline{\bftab 87.8 $\pm$ 0.5} 
  & \underline{\bftab 88.1 $\pm$ 0.6} 
  & \underline{\bftab 92.1 $\pm$ 0.1} 
  & \underline{\bftab 92.7 $\pm$ 0.3}\\
\bottomrule[1.5pt]
\end{tabular}
}
\caption{\textbf{Comparisons under various data heterogeneity and communication delay}. Two data heterogeneity levels (Dir(0.1), Dir(0.3)) and two delay levels ($\N(5), \N(20)$) are tested. Dataset is CIFAR100. ``ID" indicates whether the approach uses in-domain data. If so, 1000 samples are provided. Performance higher than \texttt{Center} is underlined. The best performance is highlighted by bold.}
\label{tab:results-resnet-vit}
\end{table*}
\vspace{-.5em}
\section{Experiments}
\label{sec:exp}
\vspace{-.5em}
We first introduce the models, baselines and server-side data in the OOD setting. Client datasets and other settings are detailed in the respective sections. Additional information, such as hyperparameters, is given in~\cref{app:setting}.

\noindent\textbf{Models.}  We employ ResNet-18~\citep{he2016deep} pretrained on ImageNet \citep{imagenet}, and perform full fine-tuning. Additionally, we apply LoRA~\citep{hu2021lora} fine-tuning to a ViT16-Base \cite{dosovitskiy2020vit}, which is also pretrained on ImageNet. Furthermore, we train a convolutional neural network (CNN) from scratch, with corresponding results presented in \cref{app:res:cnn}. 

\noindent\textbf{Baseline Methods.} Our primary focus is to address data heterogeneity and asynchronous communication in a hybrid data regime. While many prior studies tackle data heterogeneity as an isolated challenge, we compare \texttt{Feddle} against baselines that \textit{also account for asynchronous communication or hybrid data availability}, thereby aligning with our experimental setup. Specifically: \textbf{a)} For hybrid data regime, we compare with \textcolor{wacvblue}{(1)} \texttt{Center}, which trains the model exclusively on server-side data.  \textcolor{wacvblue}{(2)} \texttt{Fed+FL}, which fine-tunes aggregated model using server-side data at each communication round. \textcolor{wacvblue}{(3)} \texttt{HFCL}~\cite{elbir2021hybrid}, which trains a model on server-side data and aggregate it with client models. \textcolor{wacvblue}{(4)} \texttt{FedDF}~\cite{lin2020ensemble}, which distills the knowledge from client models into the global model using server-side data. Notably, \texttt{Center}, \texttt{Fed+FT} and \texttt{HFCL} require ID data , while \texttt{FedDF} can also be applied in an OOD setting. \textbf{b)} For asynchronous communication, we compare our method with several competitive asynchronous methods including \textcolor{wacvblue}{(1)} \texttt{FedAsync} \cite{xie2019asynchronous}, \textcolor{wacvblue}{(2)} \texttt{FedBuff} \cite{nguyen2022federated}, and \textcolor{wacvblue}{(3)} \texttt{CA2FL} \cite{wang2023tackling}.
Additionally, we include the classical FL approach \texttt{FedAvg}~\cite{fedavg} as a reference. 

We categorize these methods into two groups based on whether ID data is used at the server: \textbf{a)} with ID data, including \texttt{Center}, \texttt{Fed+FL}, \texttt{HFCL}, \texttt{FedDF-ID}, \texttt{Feddle-ID} (ours), and \textbf{(b)} without ID data, including \texttt{FedAvg}, \texttt{FedAsync}, \texttt{FedBuff}, \texttt{Ca2FL}, \texttt{FedDF-OOD}, \texttt{Feddle-OOD} (ours), where the latter two methods are capable of leveraging OOD data.

\noindent\textbf{OOD Setting.} We offer a subset of ImageNet~\cite{imagenet} with 250K images for \texttt{FedDF-OOD} and \texttt{Feddle-OOD} in the OOD setting. \textit{Notably, as ResNet18 and ViT are pretrained on the ImageNet,\texttt{FedDF-OOD} and \texttt{Feddle-OOD} do \textbf{not} use additional data information, but has different effectiveness in leveraging the data.}

\vspace{-.2em}
\subsection{Results}
\vspace{-.5em}
\label{sec:res}

\vspace{+.5em}
\noindent\textbf{Various Data Heterogenity and Communication Delay.}
We first compare methods under different data heterogeneity and asynchronous communication scenarios. We simulate two levels of data heterogeneity by partitioning the data using the Dirichlet distribution following \cite{yurochkin2019bayesian}, with parameters Dir(0.1) and Dir(0.3). This results in heterogeneous client data distribution in terms of the class label distribution and dataset size. Additionally, we model the delay for each client using a half-normal distribution $\N$, based on practical observations from \citep{nguyen2022federated}, with standard deviation of 5 and 20. We define a scenario with 500 clients, and 200 communication rounds. At each round, 10 clients are sampled. We perform image classification tasks using CIFAR-100~\cite{krizhevsky2009learning}, Results on additional datasets are given in \cref{additional-results}. For methods using ID data at the server, we provide only 1K samples, acknowledging the higher cost of collecting ID data. We repeat the experiments 3 times with different random seeds, and report the mean and standard deviation. 

\begin{figure}[t]
    \centering
    \begin{subfigure}[t]{.5\columnwidth}
        \centering
        \includegraphics[width=\linewidth]{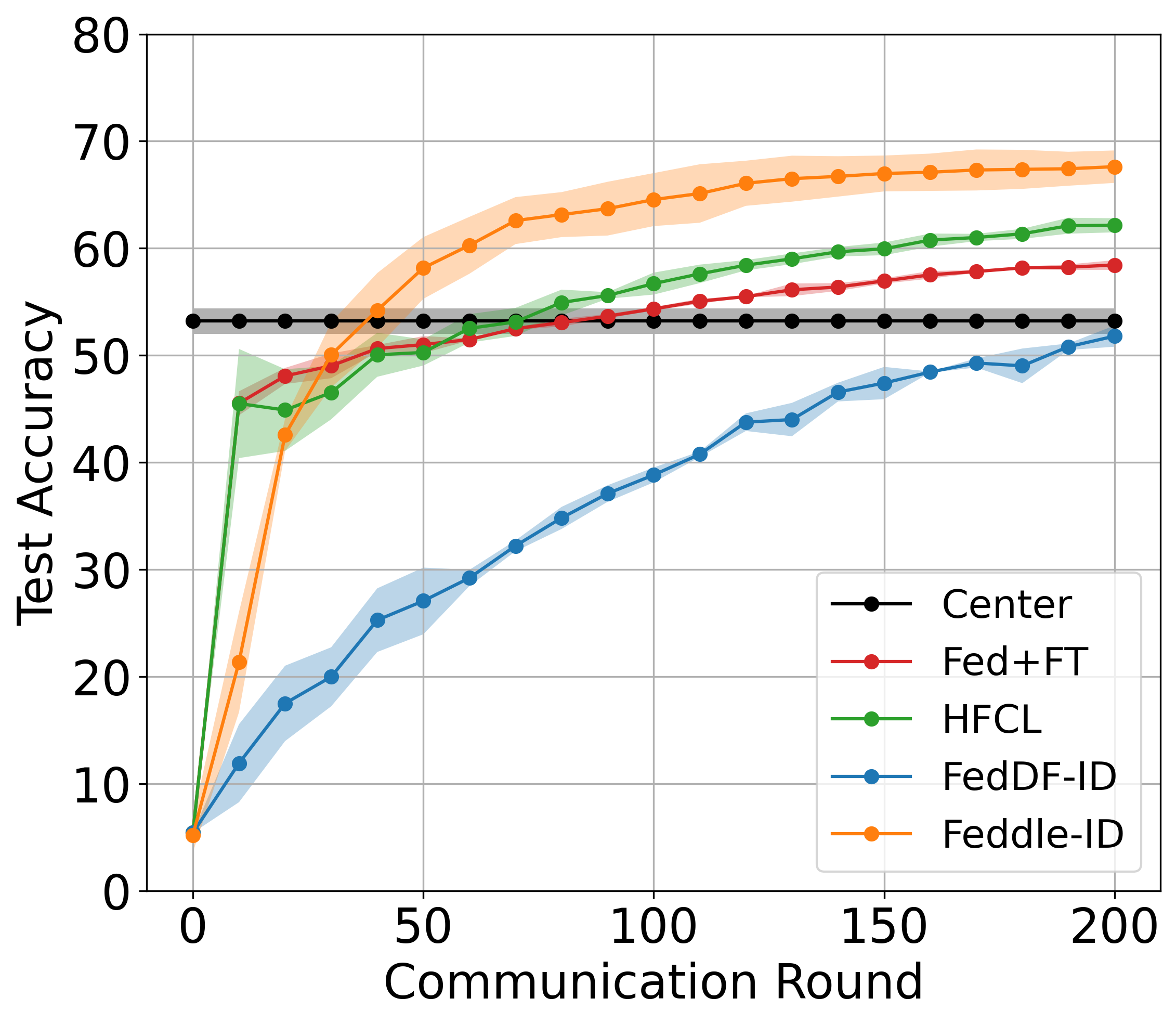}
        \caption{With In-Domain data.}
    \end{subfigure}%
    \begin{subfigure}[t]{.5\columnwidth}
        \centering
        \includegraphics[width=\linewidth]{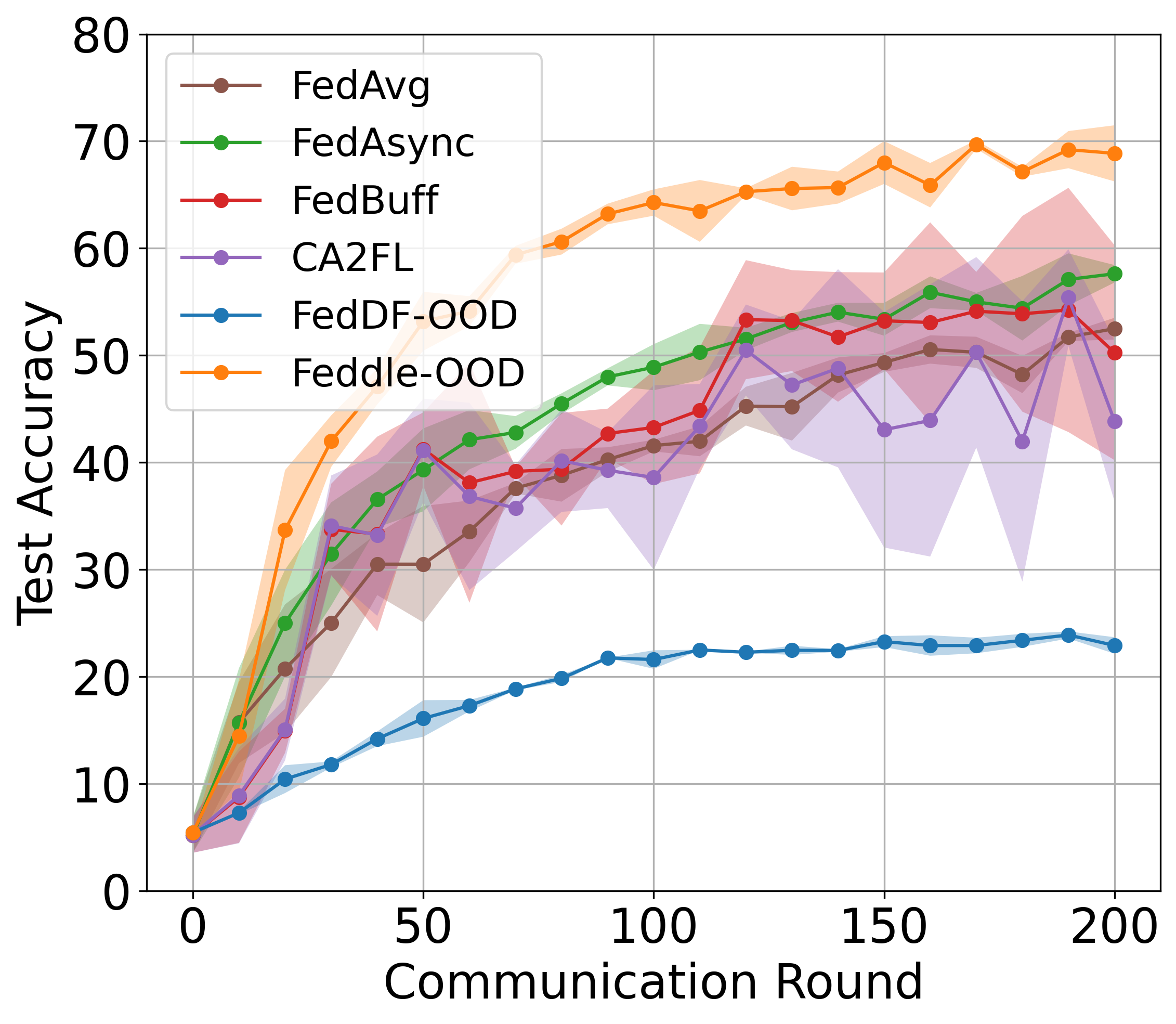}
        \caption{Without In-Domain data.}
    \end{subfigure}%
\caption{\textbf{Convergence plots} of ResNet18 on CIFAR100 with Dir(0.1), $\N(20)$. More plots are provided in \cref{app:res:convergence}.}
    \label{fig:converge-in-domain}
\end{figure}
In \cref{tab:results-resnet-vit}, we observe that \texttt{Feddle} consistently outperforms all the baseline methods by a clear margin regardless of whether ID data is available at the server. Overall, baseline methods exhibit poorer performance in scenarios characterized by strong heterogeneity (Dir(0.1)) and high delay ($\N(20)$) compared to simpler scenario (Dir(0.3) and $\N(5)$). In contrast, \texttt{Feddle} remains less impacted by heterogeneity or delay, achieving outstanding accuracy even in complex scenarios. 

\begin{table}[t!]
\centering
\begin{minipage}[t!]{0.7\linewidth}
\centering
\resizebox{\textwidth}{!}{
\begin{tabular}{lccc}
\toprule[1.5pt]
Method & ID & FEMNIST & CelebA \\
\specialrule{0em}{1pt}{1pt}
\hline
\specialrule{0em}{1pt}{1pt}
 Center &\tikzcmark  & $72.4\pm0.1$ & $75.8\pm0.3$  \\
Fed$+$FT &\tikzcmark & $82.4\pm0.9$& $84.3\pm0.1$ \\
 HFCL &\tikzcmark & $80.3\pm0.1$& $83.9\pm0.2$ \\
FedDF-ID &\tikzcmark & $81.1\pm0.6$& $83.8\pm0.3$\\
Feddle-ID (ours) &  \tikzcmark & \bftab 88.6 $\pm$ 0.1&  \bftab 90.2 $\pm$ 0.1\\
\arrayrulecolor{lightgray}
\cmidrule(lr){2-4}
\arrayrulecolor{black}
FedAvg & \tikzxmark & $74.2\pm0.1$& $74.3\pm1.0$ \\
FedAsync &  \tikzxmark& $85.2\pm0.2$& $85.9\pm0.7$\\
FedBuff &  \tikzxmark& $86.5\pm0.5$& $87.2\pm0.3$\\
CA2FL & \tikzxmark & $85.1\pm0.9$& $88.4\pm0.9$\\
FedDF-OOD &  \tikzxmark&$80.8\pm0.3$ & $81.9\pm1.2$\\
Feddle-OOD (ours) &  \tikzxmark & \bftab 88.5 $\pm$ 0.1& \bftab 90.2 $\pm$ 0.3\\
\bottomrule[1.5pt]
\end{tabular}
}
\end{minipage}
\hfill
\begin{minipage}[t!]{0.28\linewidth}
\captionof{table}{\textbf{Results of real-world data heterogeneity} on ResNet18. Delay level is set to $\N(20)$. ``ID" indicates whether the approach uses in-domain data. If so, 1000 samples are provided for FEMNIST and 200 for CelebA.}
\label{tab:results-resnet-realworld}
\end{minipage}
\end{table}

\begin{table}[t!]
\centering
\resizebox{0.7\linewidth}{!}{
\begin{tabular}{lcc}
\toprule[1.5pt]
Method & \footnotesize Dir(0.3), $\N(5)$ & \footnotesize Dir(0.3), $\N(20)$ \\
\specialrule{0em}{1pt}{1pt}
\hline
\specialrule{0em}{1pt}{1pt}
FedAvg & $52.4\pm1.2$& $59.0\pm1.5$ \\
FedAsync & $58.1\pm0.5$& $59.1\pm0.2$\\
FedBuff & $57.1\pm0.6$& $60.1\pm0.2$\\
CA2FL & $59.2\pm0.4$& $60.3\pm0.6$\\
FedDF-OOD & $53.2\pm0.2$ & $58.9\pm1.4$\\
Feddle-OOD (ours) & \bftab 60.6 $\pm$ 0.4& \bftab 62.8 $\pm$ 0.6\\
\bottomrule[1.5pt]
\end{tabular}
}
\caption{\textbf{Results of medical image dataset ISIC (2019)} on ResNet18 under two experiment settings. For FedDF-OOD and Feddle-OOD, ImageNET is provided to the server.}
\label{tab:results-medical}
\end{table}

\begin{table}[t!]
\centering
\resizebox{0.6\linewidth}{!}{
\begin{tabular}{lcc}
\toprule[1.5pt]
Method & $10\%$ & $20\%$ \\
\specialrule{0em}{1pt}{1pt}
\hline
\specialrule{0em}{1pt}{1pt}
Center & \multicolumn{2}{c}{$53.2\pm1.2$}\\
Fed+FT & $57.4 \pm 0.3$ & $57.4 \pm 0.7$ \\
HFCL & $57.6 \pm 0.2$ & $56.8 \pm 0.3$\\
FedDF-ID & $52.9 \pm 0.7$ & $51.2 \pm 0.5$\\
Feddle-ID (ours) & \bftab 68.6 $\pm$ \bftab $0.9$  & \bftab 62.3 $\pm$ \bftab $1.1$\\
\bottomrule[1.5pt]
\end{tabular}
}
\caption{\textbf{Results of noisy dataset} using CIFAR100 and ResNet18. $10\%$ and $30\%$ indicate the fraction of decentralized training data with random labels.}
\label{tab:results-noisy}
\end{table}
Moreover, for ID data, 1K samples constitute 1/50 of the total decentralized data. However, baseline FL methods may still struggle to surpass the performance of \texttt{Center}, which \textit{relies exclussively on server-side data}. Conversely, \texttt{Feddle} consistently achieve significantly higher performance than \texttt{Center}. Notably, \textit{this advantage persists even when only OOD data is available}.  These results underscore the effectiveness of our method in leveraging decentralized data with guidance from server-side data despite the challenges of data heterogeneity and communication delays. \Cref{fig:converge-in-domain} presents the convergence plots.

\noindent{\textbf{Real-World Data Heterogeneity.}}
To evaluate the performance of our method under real-world data heterogeneity. We incorporate two additional datasets: CelebA~\cite{liu2015faceattributes} and FEMNIST~\cite{cohen2017emnist}, partitioning the data according to the LEAF framework~\cite{caldas2018leaf}. This results in data distributions reflecting those of distinct real-world individuals. Additionally, we assess our method in a large-scale setting with 1000 clients, sampling 50 clients per round. The total number of communication rounds is set to 200. As shown in \cref{tab:results-resnet-realworld}, \texttt{Feddle} consistently outperforms the baseline methods, demonstrating its potential for real-world applications.

\noindent\textbf{Misaligned Domains Between Server and Client Data.}
To evaluate scenarios where the domain of server-side data differs from that of the clients, we conduct experiments on the ISIC 2019 with dermoscopic images of skin lesions. As shown in \cref{tab:results-medical}, \texttt{Feddle} outperforms all baselines even when using ImageNet as the server-side data, demonstrating promising generalization and broad applicability.

\noindent\textbf{Noisy Client Data.}
Since decentralized training data can be noisy in practice, we further evaluate the robustness of different methods under label noise. Specifically, we assess how well each method aggregates noisy model updates in the hybrid data setting. As shown in \cref{tab:results-noisy}, \texttt{Feddle} consistently achieves the best performance, highlighting its superior ability in mitigating the effects of noise through data-guided aggregation.

\noindent{\textbf{Analysis of Feddle Optimization.}}
In \cref{sec:bg}, we discuss that uniformly assigning positive aggregation coefficients may be suboptimal due to conflicting optimization directions across clients.  As \texttt{Feddle} searches for the optimal aggregation coefficients under data guidance and outperforms existing technologies, we show that the coefficient found by \texttt{Feddle} indeed contains negative values in 
\cref{fig:search_patter_dir01_n20}. Additionally, theoretical analysis in \cref{sec:method:analysis} shows that \texttt{Feddle} converges under OOD data availability if its gradient of the aggregation coefficients is aligned with the gradients when ID data is applied, namely satisfying \cref{eq:adfd}. As shown in \cref{fig:ood_signal_dir01_n20}, we find that without fallback initialization, the similarity between the optimization directions regarding ID and OOD data appears random. In contrast, with fallback initialization, the optimization directions regarding ID and OOD data become highly aligned, with the cosine similarity approaching 1, demonstrating the crucial role of the fallback mechanism in the success of \texttt{Feddle}. More plots of these analyses are provided in \cref{additional-results}.

\begin{figure}[t]
\centering
\begin{minipage}[t]{0.48\columnwidth}
    \centering
    \includegraphics[width=\linewidth]{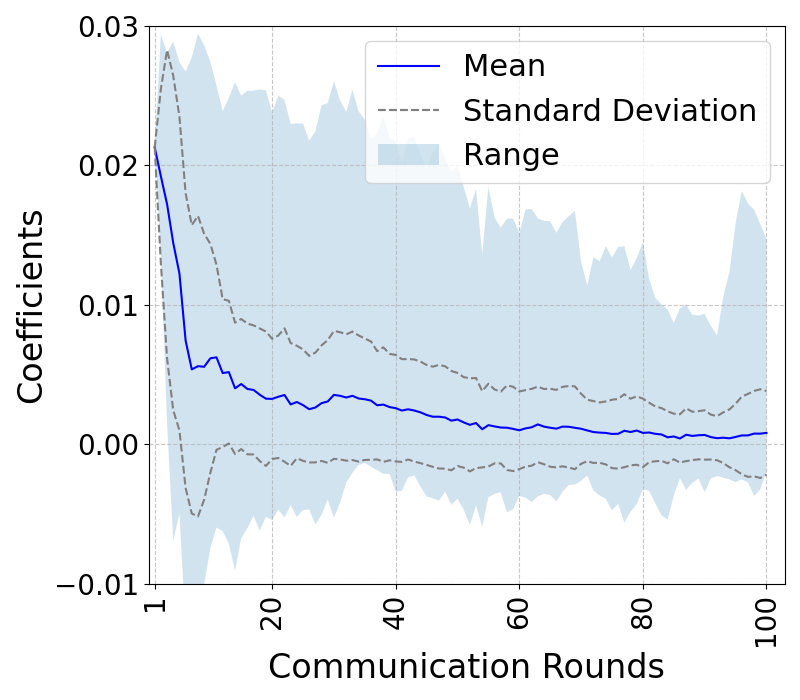}
    \caption{\textbf{Statistics of aggregation coefficients identified by \texttt{Feddle}} using ResNet and CIFAR10 with Dir(0.1), $\N(20)$.}
    \label{fig:search_patter_dir01_n20}
\end{minipage}
\hfill
\begin{minipage}[t]{0.48\columnwidth}
    \centering
    \includegraphics[width=\linewidth]{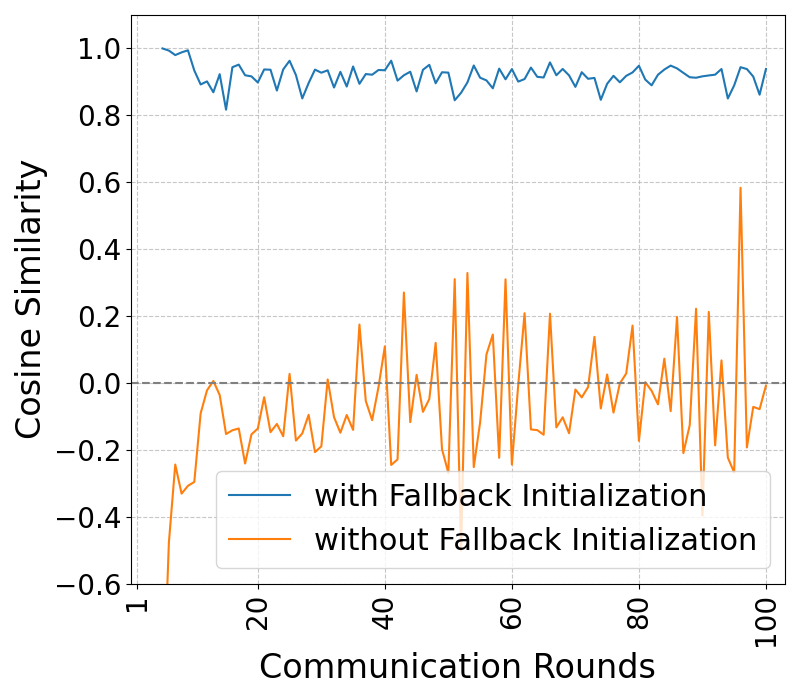}
    \caption{\textbf{Similarity of the coefficients' optimization direction between ID and OOD data} using ResNet18 and CIFAR10 with Dir(0.1), $\N(20)$.}
    \label{fig:ood_signal_dir01_n20}
\end{minipage}
\end{figure}
\begin{table}[t]
\centering
\resizebox{0.95\columnwidth}{!}{
\begin{tabular}{lcc}
\toprule[1.5pt]
& In-Domain & Out-of-Domain\\
\specialrule{0em}{1pt}{1pt}
\hline
\specialrule{0em}{1pt}{1pt}
Base & $81.9\pm0.7$ & $10.9\pm0.4$ \\
+Anchor Normalization & $83.1\pm0.4$ & 7.7 $\pm$ 0.4 \\
+Importance Score & $84.2\pm0.1$ & 9.3 $\pm$ 0.1 \\
+Fallback Initialization & \bftab 86.3 $\pm$ 0.4 & 81.0 $\pm$ 1.2 \\
+Fallback Regularization ($\lambda=0.01$) & \bftab 86.3 $\pm$ 0.2 & \bftab 82.2 $\pm$ 1.4 \\
\bottomrule[1.5pt]
\end{tabular}
}
\vspace{-.8em}
\caption{\textbf{Analysis of \texttt{Feddle} components for ID and OOD data availability} using ResNet18 and CIFAR-10 dataset.}
\label{tab:results-feddle-submodules}
\vspace{-1em}
\end{table}

\noindent\textbf{\texttt{Feddle} Components.} 
\Cref{tab:results-feddle-submodules} shows the effectiveness of \texttt{Feddle}'s components. The results under ID data availability highlight the benefit of each component in achieving superior performance, while fallback initialization is essential for handling OOD data. Further Ablation studies on the hyperparameters and comparison of computation complexities are given in \cref{sec:exp:abl}


\section{Conclusion}
In this work, we introduce the hybrid data regime, prevalent in real-world applications, and investigate methods to improve the utilization of decentralized data within this regime. Building on existing FL approaches that address the challenges of decentralized data, we propose a federated dual learning framework, \texttt{Feddle}. Our method provides a flexible solution for practical applications across various scenarios, as it can be applied whether the server data is ID or OOD relative to the clients' data. Theoretical analyses demonstrate that \texttt{Feddle} convergences faster than existing methodsm and experimental results confirm that it significantly outperforms current approaches, underscoring its effectiveness in training networks with decentralized data. Future directions are discussed in \cref{sec:fw}.
\label{sec:conclusion}

\section*{Acknowledgments}
MBB is funded by the Flemish Government under the Onderzoeksprogramma Artificiële Intelligentie (AI) Vlaanderen programme.

{
    \small
    \bibliographystyle{ieeenat_fullname}
    \bibliography{main}
}

\clearpage
\setcounter{page}{1}
\maketitlesupplementary
\appendix

\section{Proof}
\label{sec:app:proof}
In this section, we prove the convergence rate of \texttt{Feddle} when $\D_S$ is either the in-domain or out-of-domain data. For notational simplicity we let $\Delta^k_j = \Delta\w^k_{j}$ for the updates of the $j$-th client and $\boldsymbol{\Delta}^k = (\Delta^k_1,\dots,\Delta^k_M)$ for some $M>0$. We let $\nabla F$ denotes the usual gradient on $\w$ and ${\nabla_{\boldsymbol{c}} F(\w+\sum_{m=1}^M\hat c_m\Delta_m)}$ for the gradient with respect to $\boldsymbol{c}$ where $\Delta_m$ is the model update of client $m$.

We first prove a useful lemma for the smoothness of the gradient $\nabla F_{\boldsymbol{c}}$. 
\begin{lemma}\label{lem:eta_c}
    Suppose Assumptions \ref{ass:1}-\ref{ass:4} hold, then ${\tilde F(\bc):=F(\w+\sum_{m=1}^M\bc_m\Delta_m)}$ has $L(\sum_{m=1}^M||\Delta_m||^2)$-smoothness with respect to $\bc$. Therefore, for the convergence of the training on the server, $\eta_{\bc}\le\mathcal{O}\left(\frac{1}{L\sum_{m=1}^M||\Delta_m||^2}\right)$.
\end{lemma}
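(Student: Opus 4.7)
The plan is to view $\tilde F$ as the composition of the affine map $\Phi(\bc) = \w + \sum_{m=1}^M \bc_m \Delta_m$ with $F$, and push the $L$-smoothness of $F$ (which follows from Assumption \ref{ass:4} applied to the average $F=\frac{1}{J}\sum_j F_j$) through this composition using the chain rule and two invocations of Cauchy--Schwarz. The standard smooth step-size condition $\eta \le 1/L'$ for gradient descent on an $L'$-smooth function then yields the claim about $\eta_{\bc}$.

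\textbf{Step 1: Compute the gradient in $\bc$.} Since $\Phi$ is affine with Jacobian equal to the $d\times M$ matrix $U=[\Delta_1,\ldots,\Delta_M]$, the chain rule gives
\begin{equation*}
\nabla_{\bc}\tilde F(\bc) \;=\; U^{\top}\nabla F\bigl(\Phi(\bc)\bigr), \qquad \bigl(\nabla_{\bc}\tilde F(\bc)\bigr)_m = \Delta_m^{\top}\nabla F\bigl(\Phi(\bc)\bigr).
\end{equation*}

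\textbf{Step 2: Bound the gradient difference.} For any $\bc,\bc'\in\mathbb{R}^M$,
\begin{equation*}
\bigl\|\nabla_{\bc}\tilde F(\bc)-\nabla_{\bc}\tilde F(\bc')\bigr\|^2 = \sum_{m=1}^M\bigl(\Delta_m^{\top}(\nabla F(\Phi(\bc))-\nabla F(\Phi(\bc'))\bigr)^2 \le \Bigl(\sum_{m=1}^M\|\Delta_m\|^2\Bigr)\bigl\|\nabla F(\Phi(\bc))-\nabla F(\Phi(\bc'))\bigr\|^2
\end{equation*}
by Cauchy--Schwarz on each coordinate. Applying the $L$-smoothness of $F$ bounds the last factor by $L^2\|\Phi(\bc)-\Phi(\bc')\|^2$. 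A second Cauchy--Schwarz estimate, $\|\sum_m(\bc_m-\bc'_m)\Delta_m\|^2 \le (\sum_m\|\Delta_m\|^2)\|\bc-\bc'\|^2$, then gives
\begin{equation*}
\bigl\|\nabla_{\bc}\tilde F(\bc)-\nabla_{\bc}\tilde F(\bc')\bigr\| \;\le\; L\Bigl(\sum_{m=1}^M\|\Delta_m\|^2\Bigr)\|\bc-\bc'\|,
\end{equation*}
establishing the claimed smoothness constant $L':=L\sum_{m=1}^M\|\Delta_m\|^2$.

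\textbf{Step 3: Extract the step-size condition.} Invoke the textbook descent lemma for $L'$-smooth functions: gradient descent on $\tilde F$ with step $\eta_{\bc}$ yields monotone decrease provided $\eta_{\bc}\le 1/L'$, so $\eta_{\bc}=\mathcal{O}\bigl(1/(L\sum_m\|\Delta_m\|^2)\bigr)$ suffices for server-side convergence, as stated.

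\textbf{Main obstacle.} There is no real obstacle: the result is a routine chain-rule and Cauchy--Schwarz calculation. The only thing to watch is the factor $\sum_m\|\Delta_m\|^2$ appearing twice (once from aggregating the coordinate bounds, once from bounding $\|\Phi(\bc)-\Phi(\bc')\|$) and that after taking square roots these combine into a single factor of $\sum_m\|\Delta_m\|^2$, not its square. It is also worth noting that this bound is tight only up to the alignment of the $\Delta_m$'s; one could sharpen it to $L\|U\|_{\mathrm{op}}^2$ by keeping the operator norm of $U$, but the stated form is what subsequent arguments use (in particular to justify the choice of $\eta_{\bc}$ in Theorems \ref{thm:in-domain} and \ref{thm:out-of-domain}).
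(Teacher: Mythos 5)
Your proof is correct and follows essentially the same route as the paper's: compute $\nabla_{\bc}\tilde F$ via the chain rule as $(\langle\nabla F(\w(\bc)),\Delta_m\rangle)_m$, apply Cauchy--Schwarz coordinate-wise, invoke the $L$-smoothness of $F$, and apply Cauchy--Schwarz once more to $\|\sum_m(\bc_m-\bc'_m)\Delta_m\|$, yielding the constant $L\sum_m\|\Delta_m\|^2$. Your explicit note that $F$ inherits $L$-smoothness from the $F_j$ by averaging, and your spelled-out descent-lemma step for the $\eta_{\bc}$ condition, are small additions the paper leaves implicit.
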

\begin{proof}
    Let $\w(\bc) = \w + \sum_{m=1}^M \bc_m\Delta_m$. Then, we have by definition that
\begin{align*}
    &||\nabla \tilde F_{\bc}(\bc) - \nabla \tilde F_{\bc}(\bc')||\\
    \le &||((\nabla F(\w(\bc))-\nabla F(\w(\bc'))\cdot\Delta_m)_m||\\
    \le&||\nabla F(\w(\bc))-\nabla F(\w(\bc'))||(\sum_m||\Delta_m||^2)^{1/2}\\
    \le &L||(c-c')\cdot(\Delta_1,\dots, \Delta_M)||(\sum_m||\Delta_m||^2)^{1/2}\\
    \le & L(\sum_m||\Delta_m|^2)||c-c'||.
\end{align*}
\end{proof}

\begin{proof}[Proof of Theorem \ref{thm:in-domain}]
    We first discuss the relation between our merging method $\sum_{m}\hat c_m\ba_m$ and that of FedBuff or \texttt{FedAvg}. For each $k$, let $\tilde\w^{k+1}$ be the weight update from FedBuff or \texttt{FedAvg} based on $\w^k$. In FedBuff, $\tilde\w^{k+1} = \frac{\eta_g}{M}\sum_{j\in \M}\Delta_j^{k-\tau^j_k}$ where $\M$ is the recent sampled set of clients and $\tau^j_k$ is the delay. Since \texttt{Feddle} has the same sampling procedure and these gradients from new clients are added to the atlas, there must exist certain $\hat c_m$ such that $\sum_{m}\hat c_m\ba_m = \tilde\w^{k+1}$. In other words, \texttt{Feddle}'s output in this step $\w^{k+1}$ must have a lower error than $\tilde\w^{k+1}$. In \texttt{FedAvg}, $\tilde\w^{k+1} = \frac{\eta_g|\D_j|}{|\D|}\sum_{j=1}^J\Delta_j^{k}$, where each client's gradient is involved. For this, we define $\hat c_m \propto \eta_g|\D_j|/|\D|$ (by $\propto$ we resume the unnormalized gradient) if $\ba_m$ is the most recent gradient of each client and otherwise $0$. By the $L$-smoothness assumption, the difference in the error at $\tilde\w^{k+1}$ and the defined $\w^k+\sum_{m}\hat c_m\ba_m$ can be bounded by $\sum_jL||\w^k-\w^{k-\tau_k^j}||$ (their gradients are calculated at $\w^k$,$\w^{k-\tau_k^j}$). According to the proof of Theorem 3.4 of \cite{wang2024fadas}, such difference can be bounded by Eq.(C.8), and their overall order is less than $\eta_g^2\eta^2_l$, which is therefore absorbed in  -$\eta_g\eta_l||\nabla F(\w^k)||^2$ for sufficiently small parameters. Thus, we can safely assume that the same conclusion holds when comparing \texttt{Feddle} and \texttt{FedAvg} up to a negligible error.  
    
    Now, we consider the expected loss on $\D_S$ (and thus $\D$), and $\D_j$, $1\le j\le J$, we have
    \begin{align*}
        F(\w^{k+1}) - F(\w^{k})&=  F(\w^{k+1}) - F(\tilde\w^{k+1}) \\
        &+ F(\tilde\w^{k+1}) - F(\w^{k}).
    \end{align*}
    Based on the proceeding argument, $F(\w^{k+1}) \le F(\tilde\w^{k+1})$ holds, thus we can assume without loss of generality that the optimization on $\boldsymbol{c}$ is initialized at the $\boldsymbol{c}$ such that $\tilde w^{k+1} = \w^k + \sum_m\hat c_m\ba_m$. By the existing convergence results of FL methods \cite{wang2024fadas}, we can upperbound $ F(\tilde\w^{k+1}) - F(\w^{k})$ by $-C_1\maE[||\nabla F(\w^k)||^2], C_1>0$ plus some constants regarding $T, K$. Thus, if we can upper bound $F(\w^{k+1}) - F(\tilde\w^{k+1})$ by $-C_2\maE[||\nabla F(\w^k)||^2], C_2>0$, then we can indeed show that the convergence rate of \texttt{Feddle} is faster at least by some factor. Now, we investigate the optimization of $\boldsymbol{c}$ in Eq.(\ref{eq:kkkk}). Since it uses SGD and the $L$- smoothness holds, we can upperbound the loss reduction by the sum of the squared $L^2$ norm of the gradients at each $\boldsymbol c$. In particular, we have
    \begin{align*}
        F(\w^{k+1}) - F(\tilde\w^{k+1}) &\le  F(\w^{k+1}_{\boldsymbol{c}_1}) - F(\tilde\w^{k+1})\\
        &\le -\frac{\eta_{\boldsymbol{c}}}{2}||\nabla_{\boldsymbol{c}} F(\tilde\w^{k+1})||^2 +\frac{\eta_{\boldsymbol{c}}^2\sigma_g'^2}{2},
    \end{align*}
    where $\w^{k+1}_{\boldsymbol{c}_1}$ is the weight update after the first step of the optimization initialized by $\tilde \w^{k+1}$, and $\sigma_g'^2$ is the noise of the gradient, which we can also assume to be controlled by the squared norm of the gradient. Next, we would like to relate the gradient regarding $\boldsymbol{c}$ with the gradient at $\w^k$ to estimate $C_2$ as previously discussed. We assumed that $\tilde\w^{k+1} = \frac{\eta_g}{J}\sum_{j\in J}\Delta_j^k$ for some $J>0$, which is a unified expression for FL methods (if $M$ anchors are actually used, simply let some $\Delta^k_j=0$ and change $J$ to $M$ in the following). By the chain rule of differentiation, we have that
    \begin{align*}
    &||\nabla_{\boldsymbol{c}}F(\tilde\w^{k+1})||^2 \\= &||(\nabla F(\tilde\w^{k+1})^\top\Delta_1,\dots,F(\tilde\w^{k+1})^\top\Delta_{J})||^2\\
        =&\sum_{j = 1}^{J}(\langle\nabla F(\w^{k}),\Delta_j\rangle +   \langle\nabla F(\tilde\w^{k+1})-\nabla F(\w^{k}),\Delta_j\rangle)^2\\
        \ge &\sum_{j=1}^{J}(\langle\nabla F(\w^k),\Delta_j\rangle)^2\\
        &- {{2\eta_l} L}||\nabla F(\w^k)||||\tilde \w^{k+1} - \w^k||||\boldsymbol{\Delta^k}||^2\\
        \ge &\frac{J}{\eta_g^2}(\langle\nabla F(\w^k),\frac{\eta_g}{J}\sum_{j=1}^J\Delta_j)\rangle^2 - \\&{{2\eta_l} L}||\nabla F(\w^k)||||\tilde \w^{k+1} - \w^k||||\boldsymbol{\Delta^k}||^2,
    \end{align*}
    where the last inequality used $J\sum_j a_j^2\ge(\sum_j a_j)^2$. Since $\frac{\eta_g}{J}\sum_{j=1}^J\Delta_j^k$ is the update in the FedBuff (or \texttt{FedAvg} depending on the mechanism), we can apply their theory to calculate the lower bound of the first term on the right-hand side. In particular, by Eq. (C.2)-(C.6) of \cite{wang2024fadas} and our condition on the gradient norm, we have that 
    \begin{align*}
        &|\langle\nabla F(\w^k),\frac{\eta_g}{J}\sum_{j=1}^J\Delta_j\rangle|
        \ge\frac{\eta_g\eta_lQ}{2}||\nabla F(\w^k)||^2 \\
        -& \frac{A_1\eta_g\eta_lQ^2\eta_l^2L^2(C+1)}{J}(\sum_{q,j}(||\nabla F(\w^{k-\tau_k^j})||^2\\
        +& 4M \hat c_{max}^2 \tau^2_{max}\max_{k-\tau^j_k\le s\le k}||\nabla F_j(\w^s)||^2))\\
        \ge&\frac{\eta_g\eta_lQ}{2}||\nabla F(\w^k)||^2\times\\
        &\left(1 - {A_1Q^2\eta_l^2L^2(C+1)C_{max}}(1+4\hat c_{max}^2 \tau^2_{max})\right)
    \end{align*}
    for some constants $A_1$. Here, we used $\frac{1}{J}\sum_j||\nabla F_j(\w)||^2\le\frac{2}{J}\sum_j||\nabla F_j(\w) - \nabla F(\w)||^2 + 2||\nabla F(\w)||^2$ in the last inequality. Note that $\eta_lQ$ is $\mathcal{O}\left(\frac{1}{\sqrt{T}}\right)$, and $\hat c_{max}\tau_{max}=\mathcal O(1)$, so the coefficients must be positive. Now, for the second term, we have by Eq.(C.6) of \cite{wang2024fadas} and our condition on the signal-to-noise ratio that 
    \begin{align*}
        &||\tilde \w^{k+1} - \w^k||^2\\
        \le &\frac{2(C+1)Q\eta^2_g\eta^2_l}{M}\sum_{q=0}^{Q-1}||\nabla F_j(\w^{k-\tau_k^j,q})||^2\\
        \le&\frac{2(C+1)Q\eta^2_g\eta^2_l}{M}\sum_{q=0}^{Q-1}(1+C\eta_lLG)^{2q}\sum_{j}||\nabla F_j(\w^{k-\tau_k})||^2\\
         \le&\frac{2(C+1)C_{max}Q\eta^2_g\eta^2_l}{M}\sum_{q=0}^{Q-1}(1+C\eta_lLG)^{2q}\sum_{j}||\nabla F_j(\w^k)||^2\\
        \le&\frac{2(C+1)C_{max}Q\eta^2_g\eta^2_l}{M}\sum_{q=0}^{Q-1}(1+C\eta_lLG)^{2q}\sum_{j}||\nabla F_j(\w^k)\\
        &-\nabla F(\w^k)+\nabla F(\w^k)||^2\\
        \le&{A_2(C+1)^2C_{max}Q\eta^2_g\eta^2_l}||\nabla F(\w^k)||^2.
    \end{align*}
    Here, $A_2$ is a constant. The second inequality is due to the fact that each $\Delta_{j,q}$ is obtained by local SGD, and therefore, the changes in the gradient can be bounded by the initial value and some constants. The last inequality is due to Assumption \ref{ass:2}, the low signal-to-noise ratio of the gradient and the order of $\eta_l$. In total, we have by choosing suitable coefficients for $\eta_l$ that
    \begin{align}\label{eq:app:gradient_bound}
        \nonumber&||\nabla_{\boldsymbol{c}}F(\tilde\w^{k+1})||^2\ge\frac{J\eta_l^2Q^2}{4}||\nabla F(\w^k)||^4 \\ 
        \nonumber -&2\eta_l^2\eta_gQL(C+1)\sqrt{A_2C_{max}
        }||\nabla F(\w^k)||^2||\boldsymbol{\Delta^k}||^2 \\
        \nonumber \ge& \left(\frac{A_3}{C_{max}}  - 2\eta_l^2\eta_gQL(C+1)\sqrt{A_2C_{max}
        }\right)\\
        &\cdot ||\nabla F(\w^k)||^2||\boldsymbol{\Delta^k}||^2.
    \end{align}
    Here, $A_3$ is a constant. We used Eq. (C.5) of \cite{wang2024fadas} and our conditions on the gradient norm for the last inequality, which yield that $C_{max}\eta_l^2Q^2J||\nabla F(\w^k)||^2\ge A_3||\boldsymbol{\Delta^k}||^2$. $\eta^2_l\eta_gQ=\mathcal{O}\left(\frac{M}{K}\right)$ which is assured to vanish given the convergence rate of federated learning algorithms, so the coefficients would be positive.
    
    Since $||\boldsymbol{\Delta^k}||^2 = \sum_j||\Delta_j||^2$, the decrease in the true loss is expected to be a constant factor of a function of $||\nabla F(\w^k)||^2$ and server steps $T$. Note that by using SGD under $L'$-smoothness, the norm of the gradient is at least $(1-L'(1+C)\eta_{\boldsymbol{c}})^T||\nabla F_{\boldsymbol{c}}(\tilde\w^{k+1})||$, therefore, $F(\w^{k+1}) - F(\w^k)$ is at least be upper bounded by
    \begin{align}\label{eq:app:A_0}
        -\eta_{\boldsymbol{c}}\frac{1-(1-L'(1+C)\eta_{\boldsymbol{c}})^{2T}}{1-(1-L'(1+C)\eta_{\boldsymbol{c}})^2}||\nabla F_{\boldsymbol{c}}(\tilde\w^{k+1})||^2.
    \end{align}
    By the choice of $\eta_{\boldsymbol{c}}$ in \Cref{lem:eta_c} and our lower bound on $||\nabla F_{\boldsymbol c}(\tilde \w^{k+1})||^2$ in \Cref{eq:app:gradient_bound}, we can upper bound the above by $-A_0(1-\eta_l^2\eta_gQ)\left(1-\frac{1}{4^{{T}}}\right)||\nabla F_(\w^{k})||^2$, where $A_0$ is a constant decreasing in $C,C_{max},L$. Therefore, the effective constants in front of $||\nabla F(\w^k)||^2$ is at least of order $-A_0(1-\eta_l^2\eta_gQ)\left(1-\frac{1}{4^{{T}}}\right)/2 - \eta_g\eta_lQ/2$ where the second constant comes from FedBuff \cite{wang2024fadas}. We obtain the results by plugging in the parameters.


\end{proof}

\begin{proof}[Proof of Theorem \ref{thm:out-of-domain}]
    Denote the expected surrogate loss by $F^h$, and the weight update after the $t$-th server step on $F^h$ and $F$ (i.e. out-of-domain and in-domain data) by $\w^k_{h^t},\w^k_{f^t}$. We first have 
    \begin{equation*}
        ||F(\w^k_{h^t}) - F(\w^k_{f^t})||\le G||\w^k_{h^t}-\w^k_{f^t}||.
    \end{equation*}
    Since we assume that \texttt{Feddle} is initialized by FedBuff when the server data is out-of-domain, we can let $\w^k_{h^0}=\w^k_{f^0}$ at the start point. Now, for $t=1$, by the definition of $\eta'_{\boldsymbol{c}}$ and the cosine similarity between the gradients, it holds that
    \begin{align*}
        &||\w^k_{h^1}-\w^k_{f^1}|| = ||\w^k_{h^1}-\w^k_{h^0}+\w^k_{f^0}-\w^k_{f^1}||\\
        =&\eta_{\boldsymbol{c}}||\eta_{\boldsymbol{c}}'\nabla_{\boldsymbol{c}} F^h(\w^k_{h^0})/\eta_{\boldsymbol{c}} - \nabla_{\boldsymbol{c}} F(\w^k_{f^0})||\\
        \le&\eta_{\boldsymbol{c}}(1-s)||\nabla_{\boldsymbol{c}} F(\w^k_{f^0})||\\
        \le&\eta_{\boldsymbol{c}}(1-s)||\nabla F(\w^k_{f^0})||||\boldsymbol{\Delta^k}||\ll1.
    \end{align*}
    Here, the last equality follows from the definition of the gradient with respect to the merging coefficients and Cauchy-Schwartz inequality.  In general, we have 
    \begin{align*}
        &||\w^k_{h^t}-\w^k_{f^t}|| \\
        =& ||\eta_{\boldsymbol{c}}'\nabla_{\boldsymbol{c}} F^h(\w^k_{h^{t-1}})-\eta_{\boldsymbol{c}}\nabla_{\boldsymbol{c}} F(\w^k_{f^{t-1}}) - \w^{k}_{f^{t-1}}+\w^k_{h^{t-1}}||\\
        \le&\eta_{\boldsymbol{c}}||\eta_{\boldsymbol{c}}'\nabla_{\boldsymbol{c}} F^h(\w^k_{h^{t-1}})/\eta_{\boldsymbol{c}} - \nabla_{\boldsymbol{c}} F(\w^k_{h^{t-1}})||\\
        &+||\w^k_{f^{t-1}}-\w^k_{h^{t-1}}||+\eta_{\boldsymbol{c}}||\nabla F(\w^k_{h^{t-1}}) - \nabla F(\w^k_{f^{t-1}})||\\
        \le&\eta_{\boldsymbol{c}}(1-s)||\nabla_{\boldsymbol{c}} F(\w^k_{h^{t-1}})||\\
        &+(1+L||\boldsymbol{\Delta^k}||^2\eta_{\boldsymbol{c}})||\w^k_{f^{t-1}}-\w^k_{h^{t-1}}||\\
        \le&\eta_{\boldsymbol{c}}(1-s)G||\boldsymbol{\Delta^k}||+(1+L||\boldsymbol{\Delta^k}||^2\eta_{\boldsymbol{c}})||\w^k_{f^{t-1}}-\w^k_{h^{t-1}}||
    \end{align*}
    And therefore by iteratively applying the formula, we have,
    \begin{align*}
        &||\w^k_{h^t}-\w^k_{f^t}||\\
        \le&\eta_{\boldsymbol{c}}(1-s)[G||\boldsymbol{\Delta^k}||\sum_{t=0}^{T-1}(1+L||\boldsymbol{\Delta^k}||^2\eta_{\boldsymbol{c}})^{t}\\
        &+(1+L||\boldsymbol{\Delta^k}||^2\eta_{\boldsymbol{c}})^{t}||\nabla_{\boldsymbol{c}} F(\w^k_{f^0})||]\\
        \le &\eta_{\boldsymbol{c}}(1-s)G||\boldsymbol{\Delta^k}||\sum_{t=0}^{T}(1+L||\boldsymbol{\Delta^k}||^2\eta_{\boldsymbol{c}})^{t}
    \end{align*}
    Therefore, we have an upper bound for $F(\w^k_{h^t})$ by
    \begin{equation*}
        F(\w^k_{h^t})\le F(\w^k_{f^t}) +\eta_{\boldsymbol{c}}(1-s)G||\boldsymbol{\Delta^k}||\sum_{t=0}^{T}(1+L||\boldsymbol{\Delta^k}||^2\eta_{\boldsymbol{c}})^{t} 
    \end{equation*}
    which means that
    \begin{align*}
        &F(\w^{k+1}_h) - F(\w^k)\le \\&F(\w^{k+1}) - F(\w^k) + \eta_{\boldsymbol{c}}(1-s)G||\boldsymbol{\Delta^k}||\sum_{t=0}^{T}(1+L||\boldsymbol{\Delta^k}||^2\eta_{\boldsymbol{c}})^{t} 
    \end{align*}
    Now, by choosing $\eta_{\boldsymbol{c}}$ adaptively such that $\eta_{\boldsymbol{c}} = \min\left(\frac{1}{LT||\boldsymbol{\Delta^k}||^2},\frac{1}{LT||\boldsymbol{\Delta^k}||}\right)$, we have, no matter the magnitude of $||\boldsymbol{\Delta^k}||$, that
    \begin{align*}
        &F(\w^{k+1}_h) - F(\w^k) \\
        \le&F(\w^{k+1}) - F(\w^k) + (1-s)\frac{A_4G}{TL}T\\
        =&F(\w^{k+1}) - F(\w^k) + A_4(1-s)\frac{G}{L}.
    \end{align*}
    with a potential price that the $C_T$ in the rate of \texttt{Feddle} changes to 
    \begin{equation*}
            C'_T = \frac{A_0}{T||\boldsymbol{\Delta^k}||}\ge\frac{A_5}{TQ\eta_l(\sigma_l + \sigma_g + G)}\ge\frac{A_0'\sqrt{K}}{T(\sigma_l + \sigma_g + G)}, 
    \end{equation*}
due to the change of $\eta_{\boldsymbol{c}}$ in \Cref{eq:app:A_0}, which still guarantees faster convergence. Therefore, we obtain the result by aggregating the loss reduction.
\end{proof}
\section{Algorithm}
\label{app:alg}
The algorithm of \texttt{Feddle} is presented in \cref{alg:server}.
\begin{algorithm}[t]
  \caption{\texttt{Feddle}}\label{alg:server}
    \begin{algorithmic}[1]
    \Require Server learning rate $\eta_g$, client learning rate $\eta_{\ell}$, number of server epochs $E_g$, number of client epochs $E_{\ell}$, atlas size $M$, number of FL rounds $K$. 
    \Ensure model $\w$
    \State{Initialize $\A=\{\}$}
    
    \Repeat{}
    \State{$j \leftarrow$ Sample available client}
        \State{Start the $j$th client training with $\{w^k, \eta_{\ell}, E_{\ell}\}$} 

        \If{receive client update}
            \State{$\Delta_j \leftarrow$ Model update from the $j$th client}
            \State{$\{\ba_m\} \leftarrow$ UpdateAtlas($\{\ba_m\}$, $\{s_m\}$, $\Delta_j$)}

        \EndIf
    
        \If{time to start coefficient search at round $k$}
            \State $\{\hat c_m\}$, $\w^{k+1}\leftarrow \text{\small GlobalModelUpdate}(\{\ba_m\}, \w^k)$
            \State {$s_m \leftarrow |\hat c_m|, \quad \forall m=1,\ldots, |\A|$.}
        \EndIf
        
    \Until{Stopped}
    
    \Function{UpdateAtlas}{$\{\ba_m\}$, $\{s_m\}$, $\Delta_j$}
   
        \If{$|\A| < M$} 
            \State $m' \leftarrow |\A|+1$
        \ElsIf{$|\A| == M$} 
            \State {$m'=\argmin_m{\{s_m\}}$}
        \EndIf
        
        \State $\ba_{m'} \leftarrow \Delta_j$

        \State\Return $\{\ba_m\}$
    
  \EndFunction

   \Function{GlobalModelUpdate}{$\{\ba_m\}$, $\w^k$}
   
        \State {$\{\bar\ba_m\} \leftarrow \text{Normalize}(\{\ba_m\})$ using \cref{eq:anc_normalization}}
        
        \State {Initialize $\{c_m\}$ as $\boldsymbol{0}$ or with respect to Fallback}
        
        \State {$\{\hat c_m\} \leftarrow$ Coefficient search using \cref{eq:kkkk} or \cref{eq:pqqq}}
        
        \State {$\w^{k+1} = \w^k + \sum_{m=1}^{|A|} \hat c_m\bar \ba_m$}

        \State\Return $\{\hat c_m\}$, $\w^{k+1}$
    
  \EndFunction
    \end{algorithmic}
\end{algorithm}
\section{Related Work}
\label{sec:rw}
We begin by discussing related work on learning from decentralized data (\cref{sec:rw:decent}), with a particular emphasis on Federated Learning (FL). Next, we examine related work on learning from hybrid data in \cref{sec:rw:hybrid}. Finally, we discuss model averaging methods from a broad scope in \cref{sec:rw:merg}.
\subsection{Learning from Decentralized Data}
\label{sec:rw:decent}
Next, we proceed to discuss learning from decentralized data, starting with related work on FL. We then briefly introduce other decentralized learning schemes.

\paragraph{Federated Learning}
The pioneering FL framework was proposed by \citet{fedavg}, which features a client-server architecture. In this setup, the server orchestrates the
training process by sending the latest global model to connected clients for local training and aggregating their model updates once local training is completed at each client. The clients are responsible for managing local training using their own data and computational resources.

\begin{itemize}[leftmargin=8pt,itemsep=0pt,topsep=0pt]
\item \textit{Data Heterogeneity}. One major challenge in FL is data heterogeneity, where clients are often geographically distant or environmentally distinct from each other~\citep{kairouz2021advances,fedprox}. This heterogeneity can lead to multiple optimization iterations during local training, causing convergence challenges for FL methods~\citep{wang_obj_inconsistency,Li2020On}. To address this issue, various enhancement strategies have been proposed. One approach involves using Bayesian modeling to capture and regularize the correlation and variation between clients~\citep{chen2021fedbe,yurochkin2019bayesian,zhang_pfedbayes,Zhu_2023_CVPR}. Another method uses a variance reduction model to correct for client-drift in local updates~\citep{scaffold}. Additionally, researchers have explored contrastive learning on shared representations~\citep{li2021model} and guided local training using learned local drift~\citep{gao2022feddc}. A different stream of work adopts a divide-and-conquer strategy, identifying clusters of data distribution among clients to make the federated learning task more focused on the data within each cluster~\citep{ifca}. In some cases, the training target is even tailored to individual client's data distribution, with information shared
within the FL framework used to obtain a better initialization for local adaptation~\citep{zhang2021parameterized,song2022personalized,nikoloutsopoulos2022,dinh_pfedme}. Notably, these methods do not focus on learning a global model that processes the entire data distribution.

\item \textit{Asynchronous Communication.}
The previous works discussed above consider synchronous communication, where the server waits for all selected clients to complete their local training and report model updates before aggregation. However, this setup can lead to long wall-clock times due to the slowest client's downlink, local training, and uplink delays~\citep{bonawitz2019}. To address this issue, researchers have proposed various strategies for handling asynchronous communication. One approach involves sampling a large number of clients and aggregating model updates as soon as a minimum threshold is reached, while delayed clients are discarded~\citep{bonawitz2019}. However, this method can lead to skewed population data distributions observed by the server, especially when clients have inherent and consistent delay. Moreover, abandoned clients may still complete their local
training, resulting in excessive energy consumption. This approach is also not applicable if client groups are not large-scale (e.g., hundreds or thousands). To mitigate these issues, some researchers have proposed reducing the local training workload of slow clients~\citep{zhang2023timelyfl} or discarding model updates that exceed a predefined delay threshold (could be multiple communication rounds)~\citep{liu2024fedasmu}. However, they are not able to comprehensively address the challenges of asynchronous communication. Another line of work focuses on developing algorithms that can accept and utilize model updates regardless of their delay, allowing for more efficient asynchronous communication~\citep{xie2019asynchronous,nguyen2022federated,wang2023tackling,wang2024fadas,leconte2024queuing}, which are the main baselines we compare with in this paper.
\end{itemize}
\vspace{-1em}
\paragraph{Alternative Decentralized Learning Schemes.}
Beyond FL frameworks, there are other paradigms that support decentralized training. For instance, Gossip Learning \citep{ormandi2013gossip, robustdecentralized} facilitates peer-to-peer communication among clients, eliminating the need for a central server. In this setup, distributed clients exchange and aggregate model updates directly at their local computation nodes. However, such methods typically exhibit lower efficiency compared to FL due to the lack of a centralized coordination. Given that our work focuses on hybrid data regimes where a natural center node exists, we concentrate on leveraging the FL mechanism to learn from decentralized data.

\subsection{Learning from Hybrid Data}
\label{sec:rw:hybrid}
Prior studies have explored leveraging server-side data to enhance decentralized learning. These methods consider scenarios where clients lack  sufficient computational resources for local training and instead upload their data to the server \citep{elbir2021hybrid,ni2023semi,feng2023hybrid}. The server computes model updates on the behalf of these clients while coordinating federated learning among other clients. Additionally,  work incorporating knowledge distillation into FL frameworks \citep{li2019fedmd,lin2020ensemble,yang2023fedfed} leverages (collected or synthesized) server-side data to integrate clients' knowledge into the global model. However, these approaches do not address the challenge of asynchronous communication, as knowledge is equally extracted from every client regardless of potential communication delays. In contrast, our method tackles more practical challenges and can be applied to a broader range of scenarios thanks to the accommodation of out-of-domain data availability. 

Notably, \citet{yueqioptimizing} propose a method that searches for optimal merging coefficients, similar to our approach. However, their technique is limited to searching within the convex hull of reported models, whereas we demonstrate that optimal coefficients can be negative. Furthermore, our approach can accommodate out-of-domain data availability scenarios where server-side data is visually distinct from decentralized data, whereas these previous works are restricted to in-domain data availability.

\subsection{Model Averaging}
\label{sec:rw:merg}
Beyond model aggregation at the server in federated learning, model averaging has been explored in other areas as well. For instance, \citet{pmlr-v162-wortsman22a} demonstrate the potential of averaging models fine-tuned with diverse hyperparameter configurations. Several works
\citep{ilharco2023editing,dataless,yadav2023tiesmerging,mario} have proposed advanced averaging strategies to merge models fine-tuned for different downstream tasks in language modeling, aiming to create a new model with multiple capabilities. However, these approaches often overlook data heterogeneity, which can lead to conflicting information between different weights (or model updates), as discussed in \cref{sec:bg}. In image generation, significant oscillation has been observed during the training of diffusion models. To address this, \citet{liu2024linear} propose searching for optimal coefficients to merge all historical model weights. However, such averaging methods typically do not involve a cyclic optimization process between server and client, which means they also do not investigate the asynchronous communication challenge. This distinction sets our contribution apart from these related works.
\section{Experimental Setting}
\label{app:setting}

\paragraph{Models.}
In this paper, we conduct experiments on three network architectures: \textbf{1)} A small Convolutional Neural Network (CNN), consisting of three convolutional layers, one pooling layer, and two fully connected layers. \textbf{2)} A ResNet18~\cite{he2016deep} model pre-trained on ImageNet~\citep{imagenet}, using the checkpoint shared by PyTorch.\footnote{Source: \url{https://download.pytorch.org/models/resnet18-f37072fd.pth}} \textbf{3)} A Vision Transformer (ViT16-Base)~\citep{dosovitskiy2020vit} model pre-trained on ImageNet~\citep{imagenet}, using the checkpoint provided by \citet{jia2022vpt}.\footnote{Source: \url{https://github.com/KMnP/vpt}}
During local training, clients update all parameters of the CNN and ResNet18 models. For ViT16-Base, we apply Low-rank Adaptation (LoRA)~\cite{hu2021lora} with a rank of 4 and an adaptation scale of 8. Input images are resized to $32\times 32$ for the CNN, $224\times 224$ for ResNet18, and $224\times 224$ for ViT16-Base.

\paragraph{Datasets.}
We employ three datasets to simulated data heterogeneity: CIFAR10/100 \cite{krizhevsky2009learning}, and Fashion-MNIST \cite{xiao2017/online}. For CIFAR-100, the network is trained to classify 20 superclasses, while for the other datasets, it predicts their respective 10 classes.
Following previous work~\citep{yurochkin2019bayesian}, we partition the training data among clients using a Dirichlet distribution with two concentration parameters: Dir(0.1) and Dir(0.3). For CIFAR100, we perform partitioning with respect to the 100 fine classes, which can induce label concept drift \citep{Zhu_2023_CVPR} and increase data heterogeneity among clients. For the other datasets, partitioning is based on their respective 10 classes.
For the in-domain data availability, we reserve 1000 data points from each dataset's test set as server-side data. The remaining test data is used to evaluate the test accuracy. For the out-of-domain data availability, we use ImageNet~\citep{imagenet} as the server-side dataset. Given that out-of-domain data often has abundant resources, we create a subset of ImageNet with 250K data points.

Additionally, we utilize two dataset containing real-world data heterogeneity: FEMNIST~\cite{cohen2017emnist} and CelebA~\cite{liu2015faceattributes}. We partition the data by real-world individual following the LEAF framwork~\cite{caldas2018leaf}, and then randomly sample 1000 clients to form the federated learning group. For FEMNIST, we restrict the task to be predicting the first 40 classes (removing data corresponding to other classes). While for CelebA the task is to predict the "Smiling" attribute.

\paragraph{Methods.}
We compare our method \texttt{Feddle} with several competitive federated learning approaches: \textbf{1)} hybrid approaches \texttt{Fed+FT}, \texttt{HCFL}~\cite{elbir2021hybrid}, \texttt{FedDF}~\cite{lin2020ensemble}, and \textbf{2)} asynchronous methods \texttt{FedAsync} \cite{xie2019asynchronous}, \texttt{FedBuff} \cite{nguyen2022federated},
and \texttt{CA2FL} \cite{wang2023tackling}.
Additionally, we include \texttt{FedAvg} \cite{fedavg} and \texttt{Center}, which is trained exclusively on the server-side data. The learning rate and number of training epochs for the clients are tuned with Adam optimizer~\citep{kingma2014adam} using \texttt{FedAvg}, and these settings are subsequently applied to all other methods.
Hyperparameter tuning is performed for each method using CIFAR10 as a representative dataset. The optimal hyperparameters are then applied to the other datasets, as our experiments show that they remain largely consistent across different datasets with the same network architecture.

The hyperparameters searched for each method are as follows:
\begin{itemize}[leftmargin=8pt,itemsep=0pt,topsep=0pt]
\item \texttt{Fed+FT}: Server learning rate $\eta\in\{1e-5, 1e-4, 1e-3\}$, Server epochs $M\in\{1, 10, 20\}$.
\item \texttt{FedDF}: Server learning rate $\eta\in\{1e-5, 1e-4, 1e-3\}$, Server epochs $M\in\{1, 10, 20\}$.
\item \texttt{Center}: Server learning rate $\eta\in\{1e-5, 1e-4, 1e-3\}$, Server epochs $M\in\{10, 20, 50\}$.
\item \texttt{FedAsync}: Adaptive constant $a\in\{0.1, 0.5, 0.9\}$ and mixing constant $\alpha\in\{0.2, 0.4, 0.8\}$.
\item \texttt{FedBuff}: Server learning rate $\eta\in\{0.01, 0.1, 1\}$ and buffer size $M\in\{10, 20, 50\}$.
\item \texttt{CA2FL}: Server learning rate $\eta\in\{0.01, 0.1, 1\}$ and buffer size $M\in\{10, 20, 50\}$.
\item \texttt{Feddle}: Server learning rate $\eta\in\{1e-5, 1e-4, 1e-3\}$, server epochs $M\in\{1, 10, 20\}$.
\end{itemize}

Model atlas size and fallback regularization strength of \texttt{Feddle} are determined as discussed in \cref{sec:exp:abl}. \texttt{HCFL} does not have additional hyperparameters as it trains a network at the server the same as the clients do.

\paragraph{Experimental Details.}
To account for client delays, we introduce staleness by sampling each client's delay from a half-normal distribution, which matches the practical distribution as observed in previous work~\citep{nguyen2022federated}. Specifically, we take the absolute value of a sample drawn from a zero-mean Gaussian distribution and consider two standard deviations: 5 and 20. To avoid re-sampling clients that have not yet reported their model updates, each client is sampled only once until its update is received in the experiments.
For a fair comparison across methods, each experiment is repeated three times with different random seeds. The random seed controls  network initialization, client sampling order, client delay, and data partitioning. We evaluate the global model every 10 communication rounds and record the maximum value from the last five evaluation rounds as the final model performance.
Finally, we compute the mean and standard deviation of three runs to provide a robust estimate of each method's performance.

\section{Computational Efficiency}
\label{sec:computation}
The optimization of \texttt{Feddle} as described in \cref{eq:kkkk} requires computing gradient with respect to all anchors $\bar \ba_1\ldots \ba_{|A|} \in \mathbb{R}^d$. To enhance scalability, we reduce GPU memory usage and enable distributed computing by applying the following technique. During forward propagation (i.e.\ when computing \cref{eq:kkkk}), we accumulate the anchors to the global model while stopping gradient propagation:
\begin{equation}
    \label{eq:poop}
    \w' = \operatorname{stop\_grad}(\w^k + c_1\bar\ba_1 + \ldots c_{|\A|}\bar \ba_{|\A|}).
\end{equation}
During backpropagation, the gradient of the coefficients is computed as:
\begin{equation}
    \label{eq:mmnn}
    \forall m=1,\ldots, |\A|,\quad \partial \ell/\partial c_m = \langle
\partial\ell/\partial\w', \bar \ba_m\rangle.
\end{equation}
For out-of-domain data availability, we replace $\ell$ with $h$. By employing \cref{eq:mmnn,eq:poop}, the anchors are excluded from the gradient computation graph, allowing us to distribute the gradient calculations across multiple nodes while storing the anchors and model separately. This design enables \texttt{Feddle} to support large models and model atlas. However, as discussed in \cref{sec:exp:abl}, \texttt{Feddle} works well with a moderately sized model atlas, and its computation cost is comparable to fine-tuning a model, which is typically manageable on a server. A comparison of the computation complexity across different approaches is provided in \cref{app:res:computation}.
\begin{table*}[t!]
\centering
\resizebox{\textwidth}{!}{
\begin{tabular}{llcccccccccc}
\toprule[1.5pt]
\multirow{2}{*}{Dataset} & \multirow{2}{*}{Method} & \multirow{2}{*}{ID} & \multicolumn{4}{c}{ResNet18} & \multicolumn{4}{c}{ViT} \\
\cmidrule(lr){4-7} \cmidrule(lr){8-11}
& & & \footnotesize Dir(0.1), $\N(20)$ & \footnotesize Dir(0.1), $\N(5)$ & \footnotesize Dir(0.3), $\N(20)$ & \footnotesize Dir(0.3), $\N(5)$ & \footnotesize Dir(0.1), $\N(20)$ & \footnotesize Dir(0.1), $\N(5)$ & \footnotesize Dir(0.3), $\N(20)$ & \footnotesize Dir(0.3), $\N(5)$ \\
\specialrule{0em}{1pt}{1pt}
\hline
\specialrule{0em}{1pt}{1pt}
\multirow{11}{*}{CIFAR-10} 
& Center & \checkmark & \multicolumn{4}{c}{$77.3\pm0.8$} & \multicolumn{4}{c}{$92.3\pm0.0$} \\
& Fed$+$FT & \checkmark 
  & \underline{$78.4\pm0.6$} 
  & \underline{$81.3\pm0.2$} 
  & \underline{$80.1\pm0.2$} 
  & \underline{$82.2\pm0.2$} 
  & \underline{$96.5\pm0.1$} 
  & \underline{\bftab 97.1 $\pm$ 0.0} 
  & \underline{$96.2\pm0.2$} 
  & \underline{$97.0\pm0.0$} \\
& HFCL & \checkmark 
  & \underline{$80.2\pm0.9$} 
  & \underline{$83.7\pm1.0$} 
  & \underline{$82.1\pm0.3$} 
  & \underline{$85.7\pm0.4$} 
  & \underline{$96.1\pm0.1$} 
  & \underline{$96.9\pm0.1$} 
  & \underline{$96.4\pm0.0$} 
  & \underline{$96.9\pm0.0$} \\
& FedDF-ID & \checkmark 
  & $57.3\pm1.5$ 
  & $75.4\pm0.4$ 
  & $74.4\pm1.0$ 
  & \underline{$82.1\pm0.3$} 
  & $91.1\pm0.7$ 
  & \underline{$96.8\pm0.2$} 
  & \underline{$93.5\pm0.2$} 
  & \underline{$96.8\pm0.0$} \\
& Ours-ID & \checkmark 
  & \underline{\bftab 86.3 $\pm$ 0.4} 
  & \underline{\bftab 86.7 $\pm$ 0.5} 
  & \underline{\bftab 87.3 $\pm$ 0.4} 
  & \underline{\bftab 87.8 $\pm$ 0.3} 
  & \underline{\bftab 97.1 $\pm$ 0.0} 
  & \underline{\bftab 97.1 $\pm$ 0.0} 
  & \underline{\bftab 97.7 $\pm$ 0.1} 
  & \underline{\bftab 97.5 $\pm$ 0.1} \\
\arrayrulecolor{lightgray}
\cmidrule(lr){2-11}
\arrayrulecolor{black}
& FedAvg 
  & 
  & $56.4\pm5.3$ 
  & $75.1\pm1.0$ 
  & $72.4\pm1.0$ 
  & \underline{$85.2\pm0.2$} 
  & $87.0\pm1.5$ 
  & \underline{$94.6\pm1.1$} 
  & $89.4\pm0.1$ 
  & \underline{$95.5\pm0.2$} \\
& FedAsync 
  & 
  & $65.1\pm3.5$ 
  & $74.9\pm2.4$ 
  & \underline{$79.3\pm1.9$} 
  & \underline{$83.9\pm1.4$} 
  & $89.4\pm1.3$ 
  & \underline{$94.7\pm0.8$} 
  & \underline{$92.8\pm0.7$} 
  & \underline{$96.3\pm0.3$} \\
& FedBuff 
  & 
  & $59.6\pm3.3$ 
  & $72.8\pm7.3$ 
  & $69.3\pm8.0$ 
  & \underline{$77.9\pm4.3$} 
  & \underline{$96.2\pm0.1$} 
  & \underline{$96.1\pm0.3$} 
  & \underline{$97.0\pm0.2$} 
  & \underline{$96.9\pm0.1$} \\
& CA2FL 
  & 
  & $64.4\pm7.2$ 
  & \underline{$80.2\pm0.9$} 
  & $71.6\pm5.7$ 
  & $76.3\pm6.8$ 
  & \underline{$96.5\pm0.1$} 
  & \underline{$96.1\pm0.1$} 
  & \underline{$97.1\pm0.1$} 
  & \underline{$96.9\pm0.0$} \\
& FedDF-OOD 
  & 
  & $29.7\pm1.1$ 
  & $29.2\pm5.6$ 
  & $42.9\pm2.9$ 
  & $42.8\pm2.1$ 
  & $29.7\pm1.1$ 
  & $29.2\pm5.6$ 
  & $42.9\pm2.9$ 
  & $42.8\pm2.1$ \\
& Ours-OOD 
  & 
  & \underline{\bftab 82.2 $\pm$ 1.4} 
  & \underline{\bftab 83.1 $\pm$ 0.3} 
  & \underline{\bftab 86.1 $\pm$ 0.5} 
  & \underline{\bftab 88.3 $\pm$ 0.6} 
  & \underline{\bftab 97.0 $\pm$ 0.2} 
  & \underline{\bftab 96.7 $\pm$ 0.4} 
  & \underline{\bftab 97.5 $\pm$ 0.1} 
  & \underline{\bftab 97.5 $\pm$ 0.0} \\
\bottomrule[1.5pt]
\end{tabular}
}
\caption{\textbf{Comparisons of different approaches} on CIFAR10. These experiments consider two data heterogeneity levels (Dir(0.1), Dir(0.3)) and two delay levels ($\N(5), \N(20)$). ``ID" indicates whether the approach uses in-domain data. If so, 1000 samples are provided. Performance higher than \texttt{Center} is underlined. The best performance in both data availabilities is highlighted by bold.}
\label{tab:results-cifar10}
\end{table*}
\begin{table*}[t]
\centering
\resizebox{0.65\textwidth}{!}{
\begin{tabular}{llccccc}
\toprule[1.5pt]
\multirow{2}[1]{*}{Dataset} & \multirow{2}[1]{*}{Method} & \multirow{2}[1]{*}{ID} & \multicolumn{4}{c}{CNN} \\
\cmidrule(lr){4-7}
& & & \footnotesize D(0.1), $\N(20)$ & \footnotesize D(0.1), $\N(5)$ & \footnotesize D(0.3), $\N(20)$ & \footnotesize D(0.3), $\N(5)$ \\
\specialrule{0em}{1pt}{1pt}
\hline
\specialrule{0em}{1pt}{1pt}
\multirow{11}[1]{*}{CIFAR-10} 
  & Center & \tikzcmark 
      & \multicolumn{4}{c}{$42.1\pm 0.1$} \\
  & Fed$+$FT  & \tikzcmark 
      & \underline{$44.3 \pm 0.5$} 
      & \underline{$47.4 \pm 0.2$} 
      & \underline{$46.9 \pm 0.6$} 
      & \underline{$51.3 \pm 0.6$} \\
  & HFCL  & \tikzcmark 
      & \underline{$44.7 \pm 0.5$} 
      & \underline{$47.1 \pm 0.4$} 
      & \underline{$47.6 \pm 0.1$} 
      & \underline{$51.2 \pm 1.1$} \\
  & FedDF-ID & \tikzcmark 
      & $18.0 \pm 1.0$ 
      & $19.8 \pm 2.0$ 
      & $37.3 \pm 1.2$ 
      & $39.5 \pm 1.3$ \\
  & Ours-ID  & \tikzcmark 
      & \underline{\bftab 51.2 $\pm$ 1.4} 
      & \underline{\bftab 52.9 $\pm$ 0.3} 
      & \underline{\bftab 51.5 $\pm$ 1.1} 
      & \underline{\bftab 53.3 $\pm$ 0.6} \\
  \arrayrulecolor{lightgray}
  \cmidrule(lr){4-7}
  \arrayrulecolor{black}
  & FedAvg & \tikzxmark 
      & $31.0 \pm 1.3$ 
      & $35.0 \pm 3.8$ 
      & $37.2 \pm 1.2$ 
      & \underline{$52.0 \pm 2.6$} \\
  & FedAsync & \tikzxmark 
      & $35.5 \pm 1.4$ 
      & $40.4 \pm 0.2$ 
      & \underline{$43.2 \pm 0.8$} 
      & \underline{$45.7 \pm 3.1$} \\
  & FedBuff & \tikzxmark 
      & $26.8 \pm 3.3$ 
      & $33.2 \pm 7.6$ 
      & $32.6 \pm 4.1$ 
      & \underline{$44.0 \pm 7.2$} \\
  & CA2FL & \tikzxmark 
      & $27.6 \pm 6.9$ 
      & $36.6 \pm 2.7$ 
      & $37.1 \pm 3.4$ 
      & \underline{$43.9 \pm 4.5$} \\
  & FedDF-OOD & \tikzxmark 
      & $18.5 \pm 1.9$ 
      & $19.4 \pm 1.9$ 
      & $35.1 \pm 0.2$ 
      & $39.3 \pm 1.4$ \\
  & Ours-OOD & \tikzxmark 
      & \bftab 37.6 $\pm$ 4.5 
      & \underline{\bftab 42.3 $\pm$ 1.6} 
      & \underline{\bftab 44.5 $\pm$ 0.8} 
      & \underline{\bftab 53.1 $\pm$ 2.5} \\
      
\midrule
\multirow{11}[1]{*}{CIFAR-100} 
  & Center & \tikzcmark 
      & \multicolumn{4}{c}{$23.3\pm0.5$} \\
  & Fed$+$FT  & \tikzcmark 
      & \underline{$26.2 \pm 0.7$} 
      & \underline{$29.7 \pm 0.3$} 
      & \underline{$27.0 \pm 0.3$} 
      & \underline{$30.3 \pm 0.3$} \\
  & HFCL  & \tikzcmark 
      & \underline{$28.9 \pm 0.4$} 
      & \underline{$30.8 \pm 0.3$} 
      & \underline{$29.3 \pm 0.4$} 
      & \underline{$31.9 \pm 0.5$} \\
  & FedDF-ID & \tikzcmark 
      & \underline{$28.7 \pm 0.5$} 
      & \underline{$29.3 \pm 0.6$} 
      & \underline{$28.9 \pm 0.1$} 
      & \underline{$29.1 \pm 0.2$} \\
  & Ours-ID  & \tikzcmark 
      & \underline{\bftab 32.7 $\pm$ 0.6} 
      & \underline{\bftab 38.6 $\pm$ 1.6} 
      & \underline{\bftab 38.0 $\pm$ 0.7} 
      & \underline{\bftab 40.3 $\pm$ 0.8} \\
  \arrayrulecolor{lightgray}
  \cmidrule(lr){4-7}
  \arrayrulecolor{black}
  & FedAvg & \tikzxmark 
      & $21.7 \pm 2.2$ 
      & \underline{$27.7 \pm 1.9$} 
      & \underline{$25.9 \pm 1.2$} 
      & \underline{$32.1 \pm 0.7$} \\
  & FedAsync & \tikzxmark 
      & \underline{$24.3 \pm 1.2$} 
      & \underline{$30.8 \pm 0.7$} 
      & \underline{$27.0 \pm 1.1$} 
      & \underline{$33.0 \pm 0.4$} \\
  & FedBuff & \tikzxmark 
      & $22.5 \pm 2.2$ 
      & \underline{$31.5 \pm 2.4$} 
      & \underline{$24.9 \pm 4.4$} 
      & \underline{$31.1 \pm 2.5$} \\
  & CA2FL & \tikzxmark 
      & $22.3 \pm 0.7$ 
      & \underline{$29.9 \pm 3.8$} 
      & \underline{$25.2 \pm 3.2$} 
      & \underline{$32.0 \pm 2.8$} \\
  & FedDF-OOD & \tikzxmark 
      & \underline{$24.1 \pm 0.4$} 
      & \underline{$29.8 \pm 0.5$} 
      & \underline{$27.1 \pm 0.4$} 
      & \underline{$33.3 \pm 0.9$} \\
  & Ours-OOD & \tikzxmark 
      & \underline{\bftab 31.9 $\pm$ 1.8} 
      & \underline{\bftab 37.2 $\pm$ 1.7} 
      & \underline{\bftab 36.9 $\pm$ 1.4} 
      & \underline{\bftab 40.4 $\pm$ 0.6} \\
      
\midrule
\multirow{11}[1]{*}{Fashion-MNIST} 
  & Center & \tikzcmark 
      & \multicolumn{4}{c}{$82.6\pm0.9$} \\
  & Fed$+$FT  & \tikzcmark 
      & \underline{$84.5 \pm 0.2$} 
      & \underline{$86.1 \pm 0.2$} 
      & \underline{$85.1 \pm 0.0$} 
      & \underline{$87.1 \pm 0.0$} \\
  & HFCL  & \tikzcmark 
      & \underline{$84.4 \pm 0.4$} 
      & \underline{$86.2 \pm 0.1$} 
      & \underline{$85.2 \pm 0.3$} 
      & \underline{$87.1 \pm 0.2$} \\
  & FedDF-ID & \tikzcmark 
      & $48.9 \pm 5.8$ 
      & $44.0 \pm 5.0$ 
      & $72.9 \pm 0.9$ 
      & $71.7 \pm 4.9$ \\
  & Ours-ID  & \tikzcmark 
      & \underline{\bftab 86.5 $\pm$ 0.2} 
      & \underline{\bftab 87.3 $\pm$ 0.0} 
      & \underline{\bftab 86.4 $\pm$ 0.2} 
      & \underline{\bftab 87.1 $\pm$ 0.1} \\
  \arrayrulecolor{lightgray}
  \cmidrule(lr){4-7}
  \arrayrulecolor{black}
  & FedAvg & \tikzxmark 
      & $60.8 \pm 4.4$ 
      & $80.7 \pm 1.0$ 
      & $81.1 \pm 0.7$ 
      & \underline{$86.6 \pm 0.1$} \\
  & FedAsync & \tikzxmark 
      & $78.4 \pm 2.2$ 
      & $80.9 \pm 1.8$ 
      & \underline{$83.4 \pm 0.6$} 
      & \underline{$85.4 \pm 1.0$} \\
  & FedBuff & \tikzxmark 
      & $79.4 \pm 1.0$ 
      & $82.4 \pm 1.5$ 
      & $81.6 \pm 4.6$ 
      & \underline{$85.4 \pm 1.0$} \\
  & CA2FL & \tikzxmark 
      & $78.9 \pm 0.9$ 
      & \underline{$85.4 \pm 0.7$} 
      & $82.1 \pm 2.7$ 
      & \underline{$85.5 \pm 0.6$} \\
  & FedDF-OOD & \tikzxmark 
      & $47.4 \pm 6.8$ 
      & $54.7 \pm 5.3$ 
      & $71.9 \pm 0.4$ 
      & $79.1 \pm 2.6$ \\
  & Ours-OOD & \tikzxmark 
      & \bftab 81.4 $\pm$ 2.4 
      & \underline{\bftab 86.1 $\pm$ 2.0} 
      & \underline{\bftab 85.1 $\pm$ 0.8} 
      & \underline{\bftab 88.2 $\pm$ 1.3} \\
      
\bottomrule[1.5pt]
\end{tabular}  
}
\caption{\textbf{Comparisons of different approaches using CNN}. These experiments consider two data heterogeneity levels (Dir(0.1), Dir(0.3)) and two delay levels ($\N(5), \N(20)$). ``ID" indicates whether the approach uses in-domain data. If so, 1000 samples are provided. Performance higher than \texttt{Center} is underlined. The best performance in both data availabilities is highlighted by bold.}
\label{tab:results-out-domain-cnn}
\end{table*}
\section{Additional Results}
\label{additional-results}
In this section, we present additional experimental results. In \cref{app:res:cifar10} we report results on CIFAR100. In \cref{app:res:cnn}, we show outcomes for a CNN trained from scratch. \cref{app:res:computation} compares the computation cost across different approaches, while \cref{app:res:pattern} illustrates additional search patterns of \texttt{Feddle}. Finally, \cref{app:res:loss} provides further analysis of the optimization signal from the surrogate loss applied in \texttt{Feddle}.

\subsection{Experiments on CIFAR10}
\label{app:res:cifar10}
\Cref{tab:results-cifar10} summarizes the results on CIFAR10. The experimental settings align with those in \cref{tab:results-resnet-vit}. As observed previously, \texttt{Feddle} consistently outperforms the baseline methods and is less affected by high data heterogeneity and communication delays.

\subsection{Training from Scratch on CNN}
\label{app:res:cnn}
We supplement our results with experiments training a CNN from scratch. As shown in \cref{tab:results-out-domain-cnn}, \texttt{Feddle} consistently outperforms all baselines. In contrast to the results with pretrained ResNet and ViT (see \cref{tab:results-resnet-vit,tab:results-cifar10}), we observe that \texttt{Feddle} may not outperform \texttt{Center} in the OOD setting under the most challenging scenario with Dir(0.1) and $\N(20)$. However, in the OOD setting \texttt{Feddle} often significantly outperforms the best baseline and its performance is less impacted by heterogeneity and delay. Moreover, when training a randomly initialized CNN, \texttt{Feddle} can leverage ImageNet data to guide the client models trained on Fashion-MNIST, further demonstrating the robustness of our framework in the OOD scenarios. 

\subsection{Server-Side Computation Cost}
\label{app:res:computation}
To measure the computation cost, we construct a federated learning group with 1000 clients and sample 50 clients per round. For simplicity, and to avoid the dynamics of asynchronous communication, we assume that all 50 clients report on time. We use the ViT network and fine-tune it with LoRA as described in \cref{app:setting}. For \texttt{Feddle}, the model atlas size is set to twice the number of clients sampled per round (i.e.\ atlas size = 100), which is good for performance as discussed in~\cref{sec:exp:abl}. We also set the buffer size of \texttt{FedBuff} to 100. For methods that conduct training at the server, we fix the number of iterations to be the same. In this work, we search for the optimal client training iterations based on \texttt{FedAvg} and apply the same setting to all methods (see \cref{app:setting}), ensuring that client-side computations remain identical. Therefore, our comparison focuses on the server-side computation cost. Specifically, we report two metrics that vary significantly across approaches: \textbf{1)} GFLOPs, which indicate the amount of computation performed by the server, and \textbf{2)} Cache size, which reflects the amount of intermediate results cached by each approach (note that all approaches cache the global model at a minimum).

As shown in the GFLOPs column of \cref{tab:metrics_scientific}, \texttt{FedAvg}, \texttt{FedAsync} and \texttt{FedBuff} incur almost negligible GFLOPs at the server since they simply aggregate the model updates. The computation cost for \texttt{Fed+FT} and \texttt{HFCL} represents the cost of fine-tuning the model at the server. In comparison, \texttt{Feddle} requires approximately 15\% more GFLOPs on the server because the gradient must be projected onto the anchors (see \cref{eq:mmnn}).  In contrast, \texttt{FedDF} demands over 10$\times$ more computation due to the inference perfomred on all reported client models for knowledge distillation.

Additionally, server must cache intermediate results during training, such as the global model. The cache size varies significantly among the approaches. As shown in the Cache (MB) column of \cref{tab:metrics_scientific}, \texttt{Fed+FT}, \texttt{HFCL}, \texttt{FedDF}, \texttt{FedAvg} and \texttt{FedAsync} have the smallest cache size, corresponding to maintaining only the global model on the server. For these methods, all received model updates can be discarded after aggregation or training. In contrast, \texttt{Feddle} and \texttt{FedBuff} has a 35\% larger cache size, which corresponds to storing 50 LoRAs saved in the model atlas or buffer. Notably, due to the cached update calibration incorporated in \texttt{CA2FL}, a vector of the same size as the full network is saved for each client. Consequently, the cache size of \texttt{CA2FL} is three orders of magnitudes larger for a total of 1000 clients, and it scales with the number of clients.

In conclusion, \texttt{Feddle} requires server computation similar to that of fine-tuning a model. Although it caches multiple model updates depending on the size of the model atlas, this does not significantly increase the overall cache size. While \texttt{Feddle} is not the most lightweight framework among the approaches, its computation cost is generally manageable for a server, and it achieves the best performance. Moreover, due to its constrained size of the model atlas, \texttt{Feddle}'s computation complexity and cache size does not rapidly scale up with the size of the federated learning group.

\begin{table}[t!]
\centering
\resizebox{0.65\linewidth}{!}{%
\begin{tabular}{lcc}
\toprule[1.5pt]
Method    & GFLOPs           & Cache (MB)     \\
\specialrule{0em}{1pt}{1pt}
\hline
\specialrule{0em}{1pt}{1pt}
Fed+FT    & \(7.1\times10^{5}\)   & \(3.4\times10^{2}\)   \\
HFCL      & \(7.1\times10^{5}\)   & \(3.4\times10^{2}\)   \\
FedDF     & \textcolor{wacvblue}{\(1.8\times10^{7}\)}   & \(3.4\times10^{2}\)   \\
FedAvg      & \(1.8\times10^{1}\)   & \(3.4\times10^{2}\)   \\
FedAsync      & \(1.8\times10^{1}\)   & \(3.4\times10^{2}\)   \\
FedBuff   & \(1.8\times10^{1}\)   & \(4.6\times10^{2}\)   \\
CA2FL     & \(1.9\times10^{1}\)   & \textcolor{wacvblue}{\(3.4\times10^{5}\)}   \\
Feddle (Ours)    & \(8.2\times10^{5}\)   & \(4.6\times10^{2}\)   \\
\bottomrule[1.5pt]
\end{tabular}%
}
\caption{\textbf{Comparison of the server-side computation cost across approaches.} The cost of \texttt{Center} is similar as \texttt{HFCL} and \texttt{Fed+FT}. The highest cost is marked in blue.}
\label{tab:metrics_scientific}
\end{table}

\begin{figure*}[t]
    \centering
    \begin{subfigure}[t]{.25\textwidth}
        \centering
        \includegraphics[width=\linewidth]{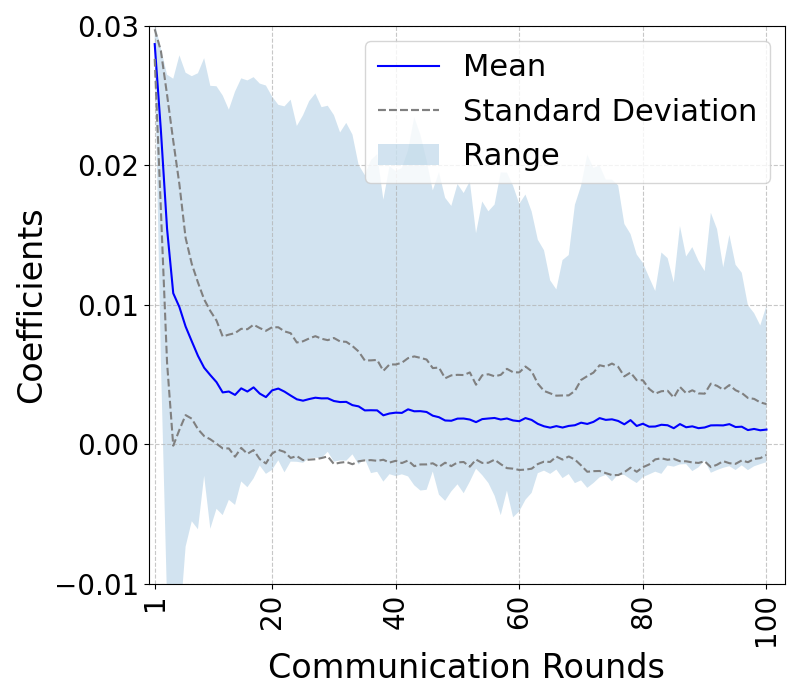}
        \caption{Dir(0.1), $\N(5)$}
    \end{subfigure}%
    \begin{subfigure}[t]{.25\textwidth}
        \centering
        \includegraphics[width=\linewidth]{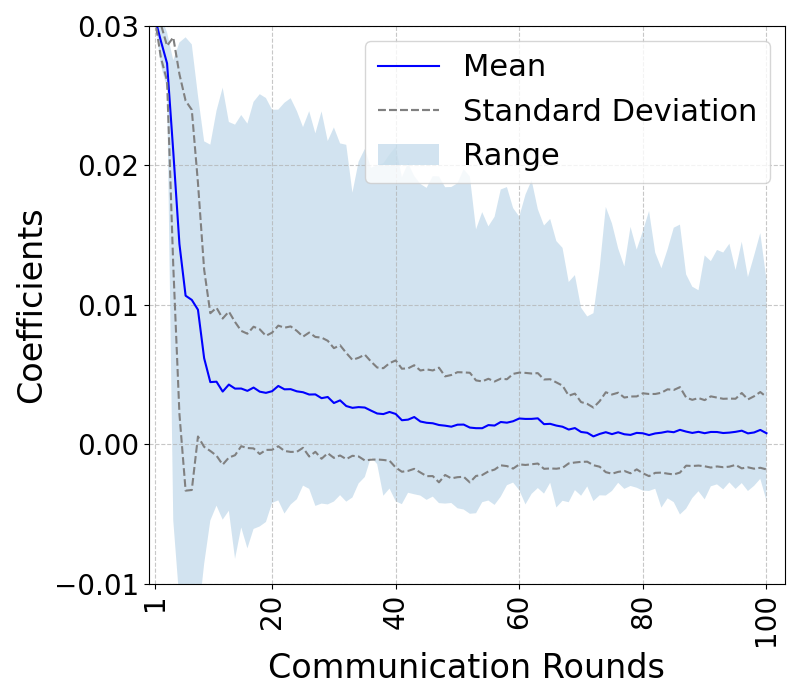}
        \caption{Dir(0.3), $\N(20)$}
    \end{subfigure}%
    \begin{subfigure}[t]{.25\textwidth}
        \centering
        \includegraphics[width=\linewidth]{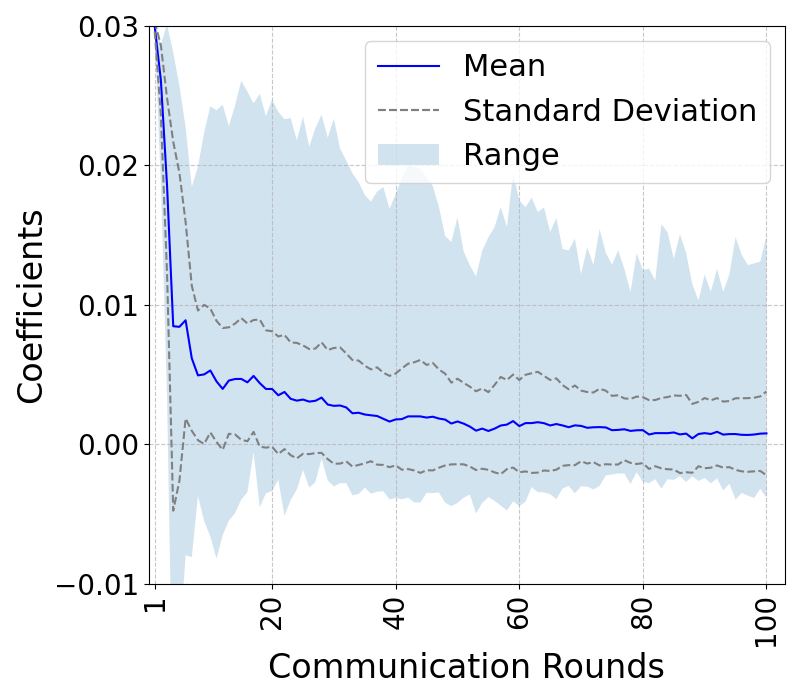}
        \caption{Dir(0.3), $\N(5)$}
    \end{subfigure}
\caption{\textbf{Statistics of coefficients identified by \texttt{Feddle} for in-domain (ID) data availability} using ResNet18 and CIFAR-10.}
    \label{fig:search_pattern}
\end{figure*}

\begin{figure*}[t]
    \centering
    \begin{subfigure}[t]{.25\textwidth}
        \centering
        \includegraphics[width=\linewidth]{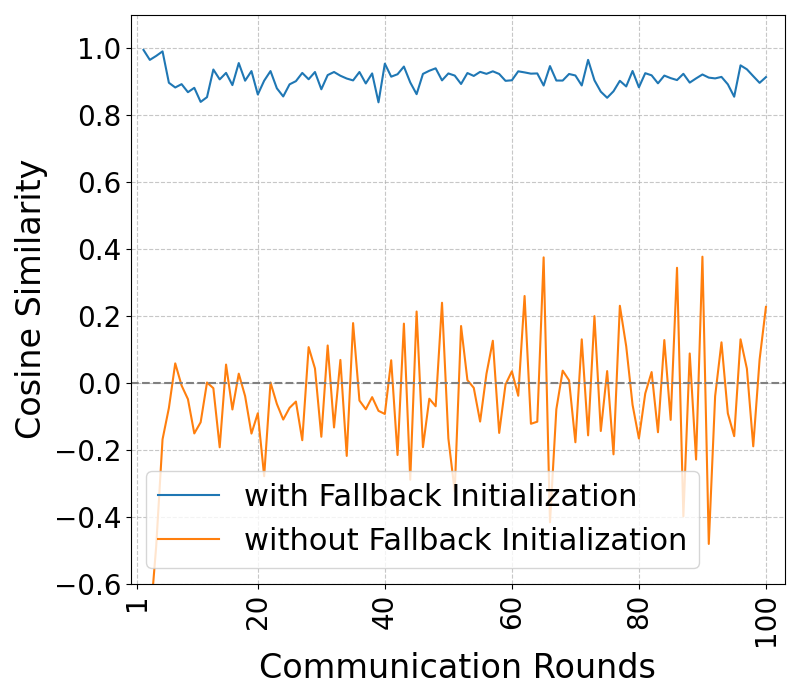}
        \caption{Dir(0.1), $\N(5)$}
    \end{subfigure}%
    \begin{subfigure}[t]{.25\textwidth}
        \centering
        \includegraphics[width=\linewidth]{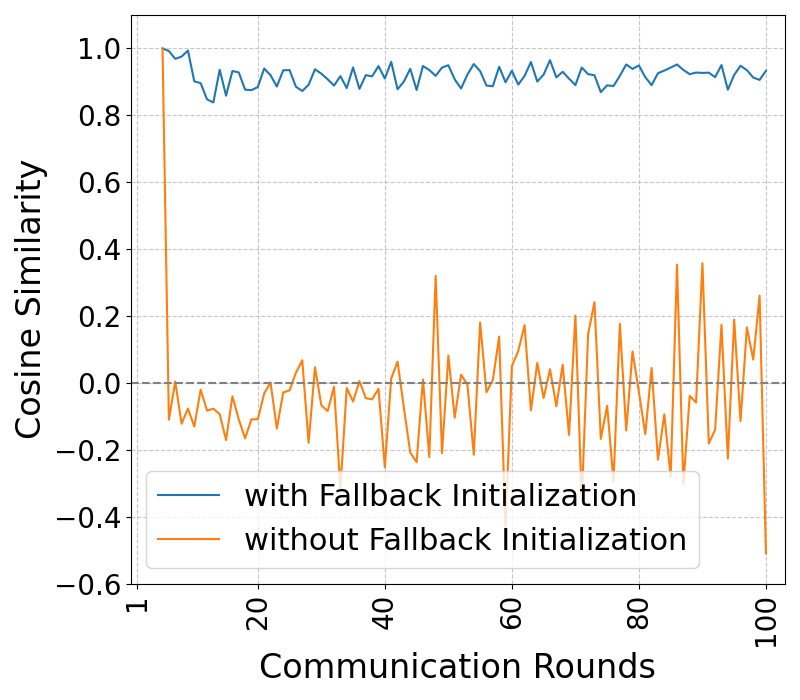}
        \caption{Dir(0.3), $\N(20)$}
    \end{subfigure}%
    \begin{subfigure}[t]{.25\textwidth}
        \centering
        \includegraphics[width=\linewidth]{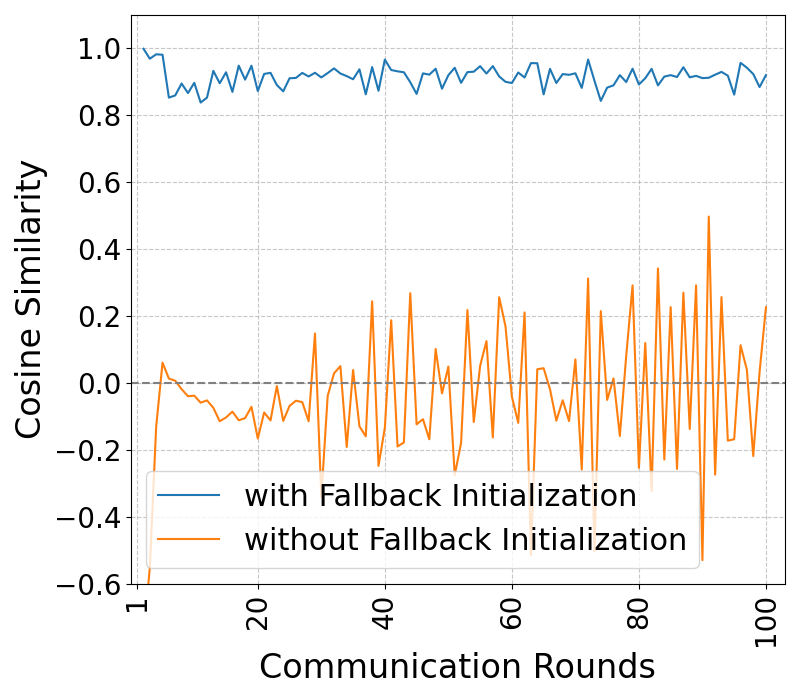}
        \caption{Dir(0.3), $\N(5)$}
    \end{subfigure}
\caption{\textbf{Similarity of the coefficients' optimization direction between in-domain (ID) and out-of-domain (OOD) data} using ResNet18 and CIFAR-10.}
    \label{fig:ood_signal}
\end{figure*}
\begin{figure*}[t!]
    \centering
    \begin{subfigure}[t]{.25\textwidth}
        \centering
        \includegraphics[width=\linewidth]{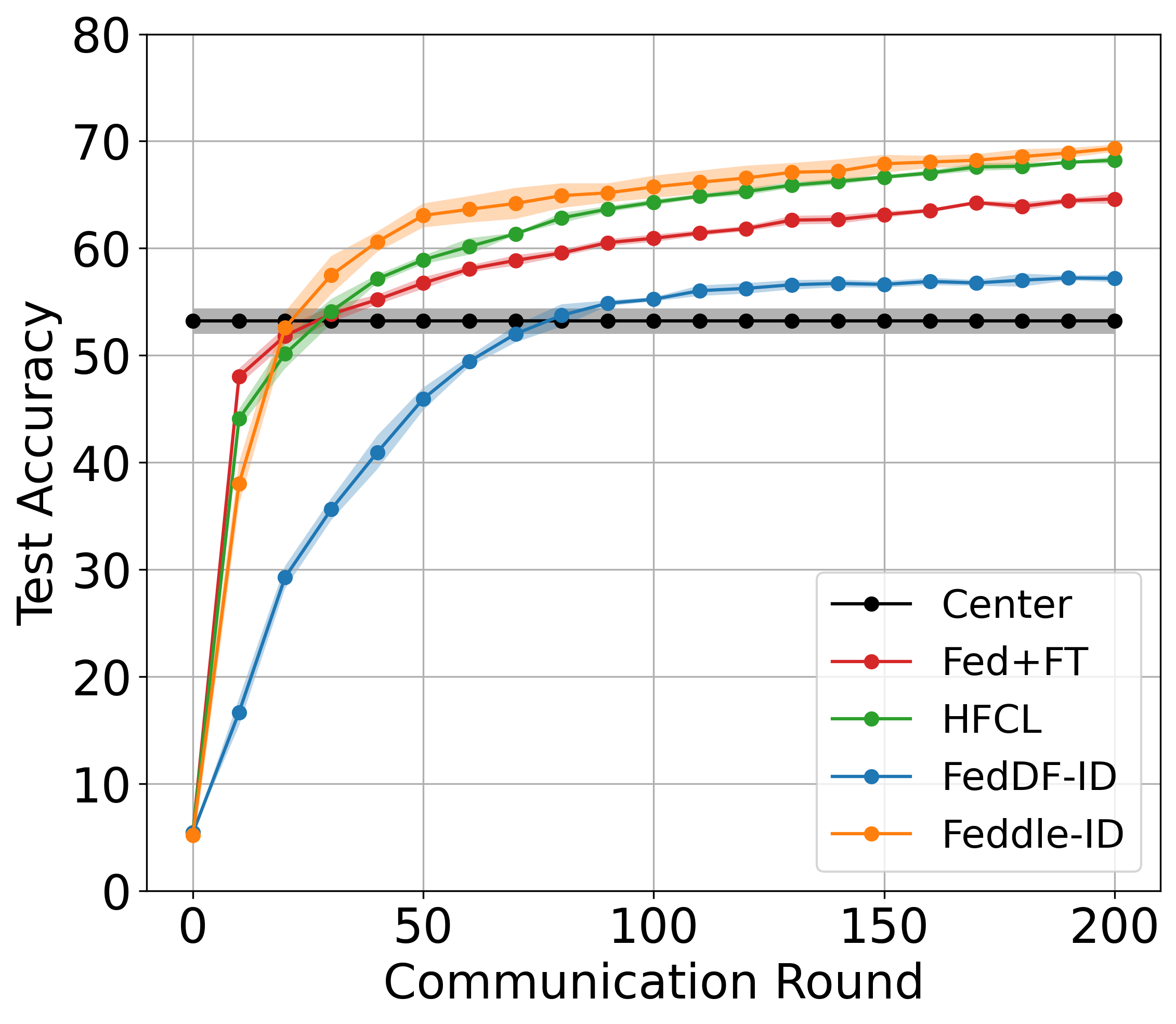}
        \caption{Dir(0.1), $\N(5)$}
    \end{subfigure}%
    \begin{subfigure}[t]{.25\textwidth}
        \centering
        \includegraphics[width=\linewidth]{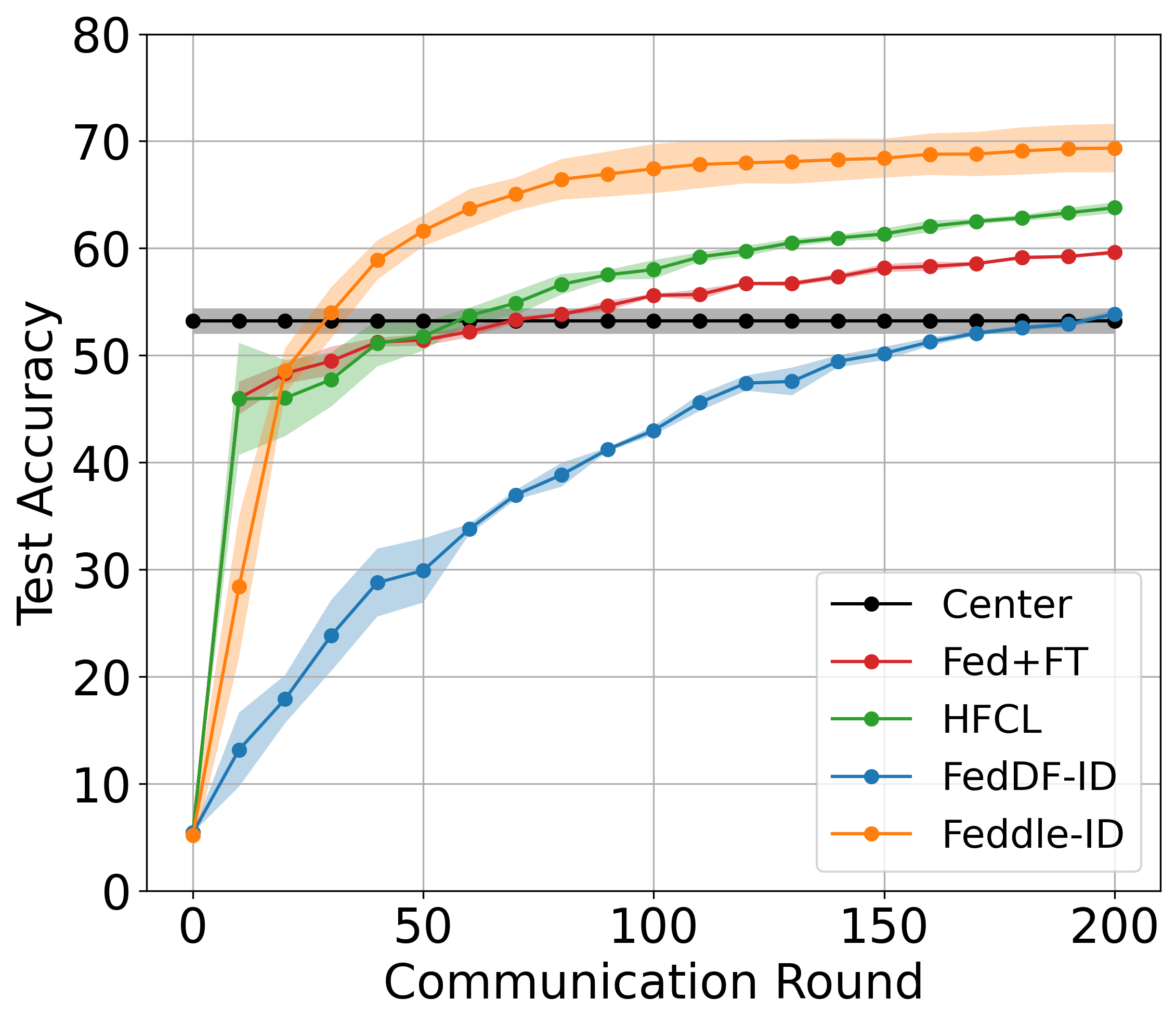}
        \caption{Dir(0.3), $\N(20)$}
    \end{subfigure}%
    \begin{subfigure}[t]{.25\textwidth}
        \centering
        \includegraphics[width=\linewidth]{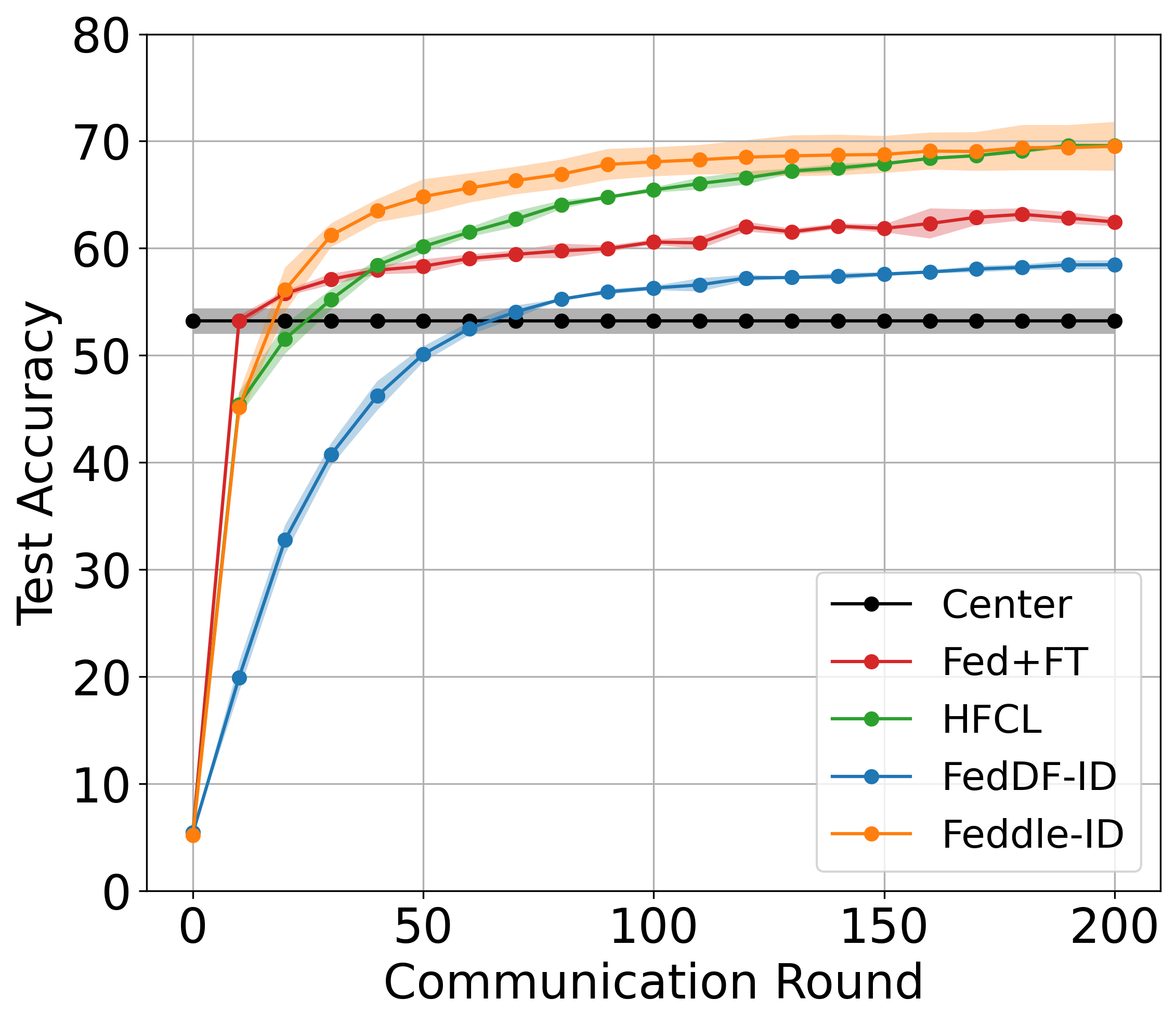}
        \caption{Dir(0.3), $\N(5)$}
    \end{subfigure}
\caption{\textbf{Convergence plots} for ResNet18 on CIFAR100 under OOD data availability.}
    \label{fig:cifar100_convergence_plot_id}
\end{figure*}

\begin{figure*}[t!]
    \centering
    \begin{subfigure}[t]{.25\textwidth}
        \centering
        \includegraphics[width=\linewidth]{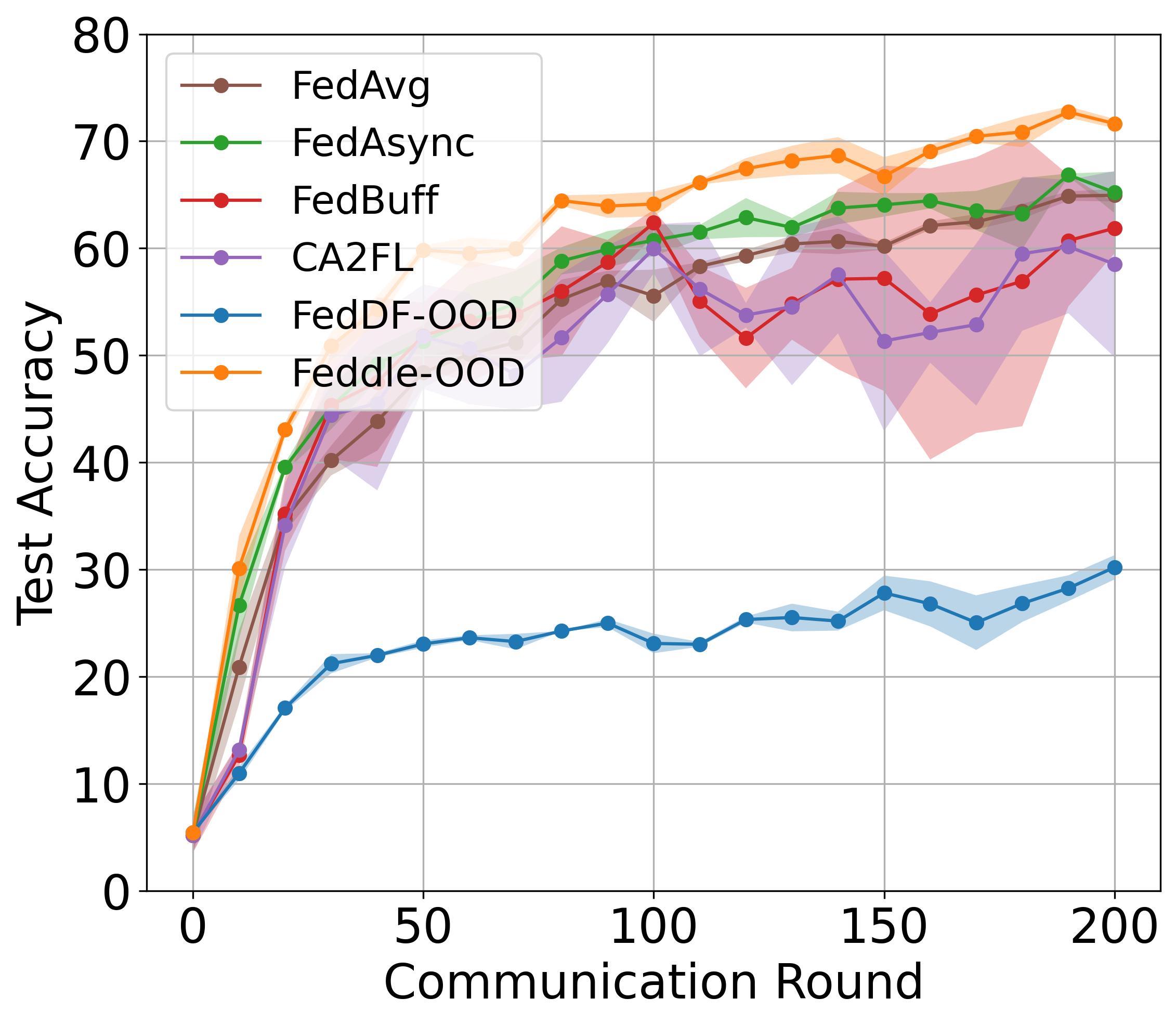}
        \caption{Dir(0.1), $\N(5)$}
    \end{subfigure}%
    \begin{subfigure}[t]{.25\textwidth}
        \centering
        \includegraphics[width=\linewidth]{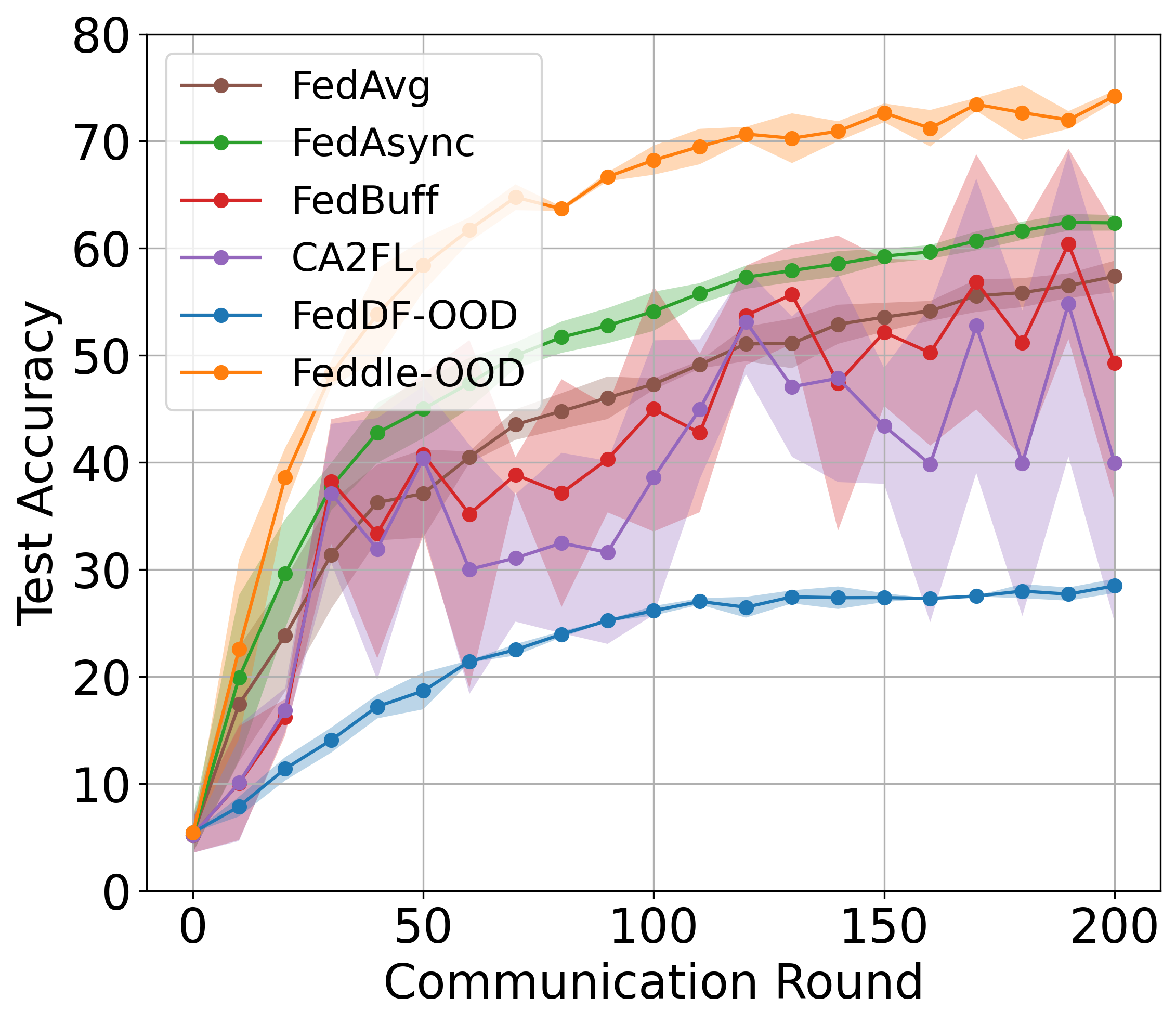}
        \caption{Dir(0.3), $\N(20)$}
    \end{subfigure}%
    \begin{subfigure}[t]{.25\textwidth}
        \centering
        \includegraphics[width=\linewidth]{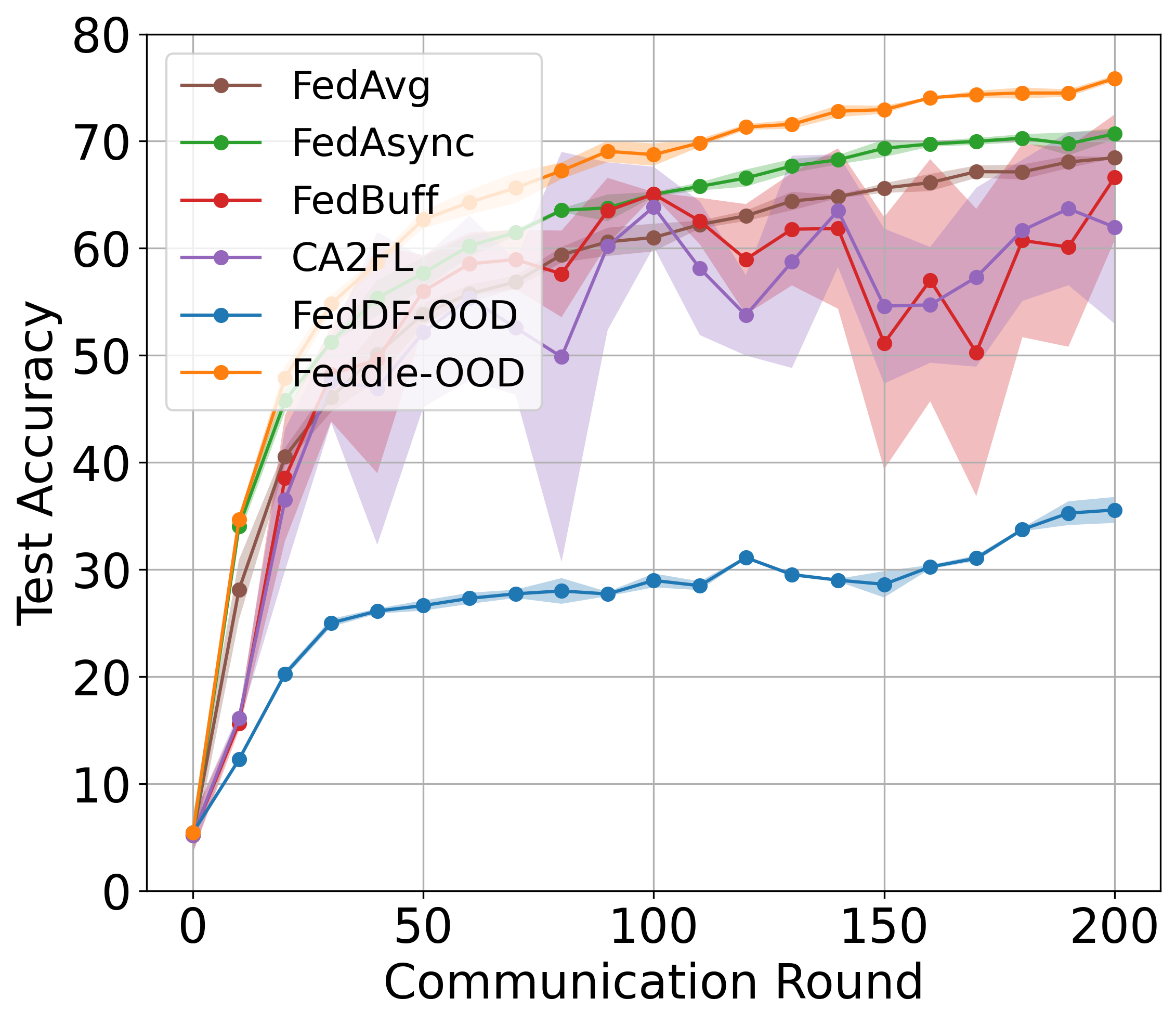}
        \caption{Dir(0.3), $\N(5)$}
    \end{subfigure}
\caption{\textbf{Convergence plots} for ResNet18 on CIFAR100 under ID data availability.}
    \label{fig:cifar100_convergence_plot_out}
\end{figure*}
\subsection{Searched Coefficient Pattern}
\label{app:res:pattern}
In \cref{sec:bg}, we show that model updates reported by clients often contain conflicting information due to data heterogeneity and delayed responses from asynchronous communication. Consequently, the optimal aggregation coefficients computed on the server are not always positive. Since \texttt{Feddle} consistently achieves the best performance by determining the aggregation coefficients under data guidance, we further demonstrate that its searched coefficients indeed include negative values. As illustrated in \cref{fig:search_pattern} (another subplot is provided in \cref{fig:search_patter_dir01_n20}), some of the coefficients deemed optimal by the server are negative. This phenomenon persists across all communication rounds and under different configurations of data heterogeneity and communication delays. This observation highlights the persistent disagreement among clients' model updates and attests to the capability of \texttt{Feddle}.

\subsection{Optimization Signal of the Surrogate Loss}
\label{app:res:loss}
In \cref{sec:method:obj}, we discuss that if the server guides decentralized training using OOD data $\D_S'$ with a surrogate loss function $h$, the induced gradient must satisfy \cref{eq:adfd} to ensure that the optimization direction aligns with the ID server-side data $\D_S$ and the client loss function $\ell$. For convenience, we restate this condition: \begin{equation} \langle\partial h(\D_S',, \cdot)/\partial \boldsymbol c,; \partial \ell(\D,, \cdot)/\partial \boldsymbol c\rangle > 0. \end{equation}
This condition is also incorporated into our theoretical analysis to justify the convergence of \texttt{Feddle} (see \cref{sec:method:analysis}).
We find that this condition is met under various settings in our experiments, and fallback initialization is crucial for its satisfaction. As shown in \cref{fig:ood_signal} (with an additional subplot in \cref{fig:ood_signal_dir01_n20}), without fallback initialization the cosine similarity between the optimization directions derived from ID and OOD data appears random. In contrast, with fallback initialization, these optimization directions become highly aligned, with the cosine similarity approaching 1. Our ablation study (see \cref{sec:exp:abl}) further confirms that without the fallback mechanism, \texttt{Feddle}'s performance deteriorates to random guessing, underscoring the critical role of fallback initialization for applying \texttt{Feddle} in scenarios where only OOD data is available.

Based on these results, we hypothesize that the optimization landscape constructed by the surrogate loss using OOD data may not be globally aligned with that formed by the client loss function using ID data. However, when the starting point is initialized near the optimum for ID data (as fallback can leverage an existing successful framework), the optimization direction derived from OOD data can still align with that of ID data.

\subsection{Convergence Plots}
\label{app:res:convergence}
We present the convergence plots for ResNet18 on CIFAR100 in \cref{fig:cifar100_convergence_plot_id,fig:cifar100_convergence_plot_out}. The plots corresponding to the configuration with Dir(0.1) and $\N(20)$ are shown in \cref{fig:converge-in-domain}.

\subsection{Ablation Studies}
\label{sec:exp:abl}
\noindent\textbf{Fallback Regularization Strength.}
One hyperparameter introduced by \texttt{Feddle} is the regularization strength $\lambda$ of the fallback mechanism. As shown in \cref{fig:ablation-regularization}, with ID data, \texttt{Feddle} appears insensitive to this hyperparameter, while a strong regularization (such as $\lambda=0.1$) may not be beneficial under the OOD data availability. In this work, we adopt 0.0 for ID settings and 0.01 for OOD settings across datasets and models.

\noindent\textbf{Model Atlas Size.}  
The other hyperparameter introduced by \texttt{Feddle} is the atlas size $M$. As shown in \cref{fig:ablation-atlas-size}, expanding the model atlas generally benefits ID settings. However, an excessively large atlas, such as one that more than twice the number of clients sampled per round, can hinder performance in the OOD settings. This is likely because the optimization signal with OOD data is less reliable than ID data, making \texttt{Feddle} more prone to converging to a location that deviates from the original objective in an expanded optimization space. In this work, we always set the atlas size to twice the number of sampled clients, which results in good performance across datasets, models and settings. 

\noindent\textbf{Computation Cost.}
We compare computation cost across different approaches in terms of the server computation complexity and cache size in \cref{app:res:computation}.

\begin{figure}[t!]
\centering
\begin{minipage}[t]{0.48\columnwidth}
    \centering
    \includegraphics[width=\linewidth]{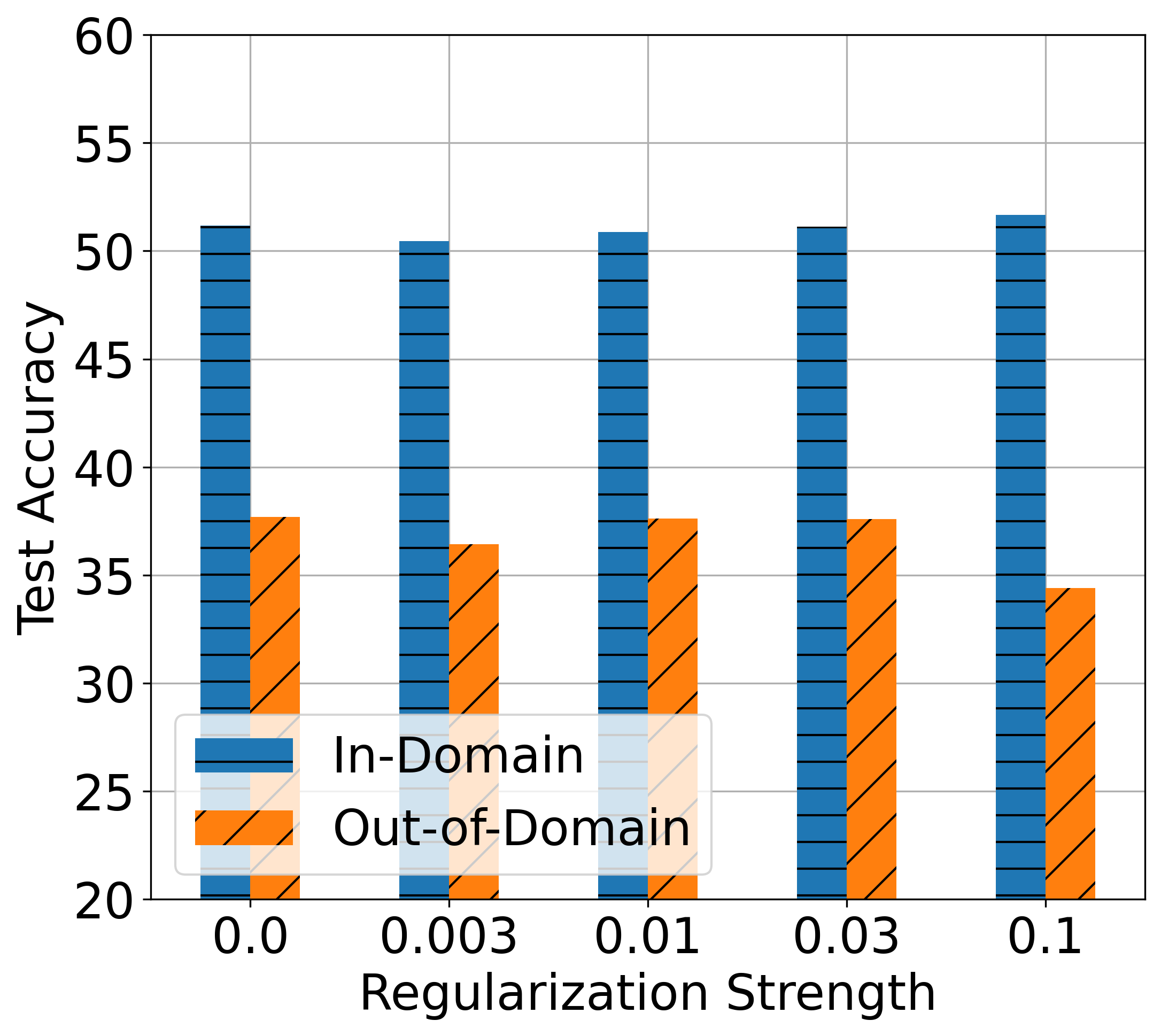}
    \caption{\textbf{Accuracy vs.\ regularization strength} using CNN and CIFAR10, with Dir(0.1), $\N(20)$.}
    \label{fig:ablation-regularization}
\end{minipage}
\hfill
\begin{minipage}[t]{0.48\columnwidth}
    \centering
    \includegraphics[width=\linewidth]{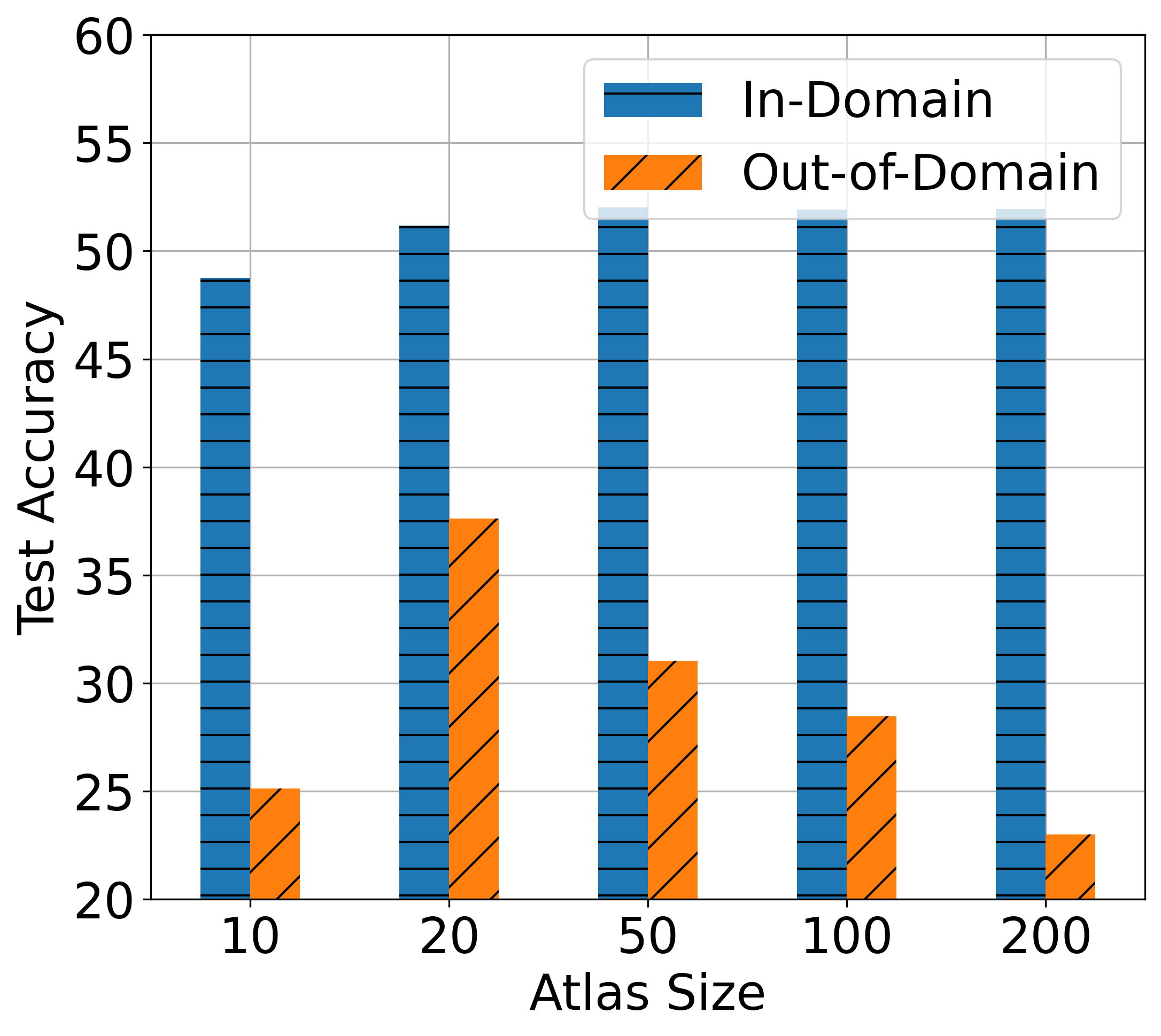}
    \caption{\textbf{Accuracy vs.\ atlas size} using CNN and CIFAR10, with Dir(0.1), $\N(20)$. \textit{10 clients are sampled per round}.}
    \label{fig:ablation-atlas-size}
\end{minipage}
\vspace{-.5em}
\end{figure}
\section{Future Work}
\label{sec:fw}
In this paper, we introduce the concept of hybrid data regimes and propose a federated dual learning framework, \texttt{Feddle}, to harness the strengths of both server-side and decentralized data. As a fundamental framework, \texttt{Feddle} opens many directions for future study and improvement. For instance, incorporating adaptive model updates, as explored in recent FL research~\citep{wang2023tackling,wang2024fadas}, could strengthen the anchors that comprise the model atlas. Additionally, the fallback mechanism can be refined for greater effectiveness, such as adaptively selecting strategies based on the delay status.
Another promising direction involves leveraging unsupervised objectives to exploit unlabeled server data, thereby enriching the available data resources. Furthermore, optimizing the computational efficiency of \texttt{Feddle} through techniques such as quantization or sparsification could improve its applicability to extremely large-scale settings. For example, the anchors can be quantized or sparsified before performing the coefficient search (c.f.\  \cref{eq:mmnn,eq:poop}).
Finally, integrating \texttt{Feddle} with differential privacy or trusted execution environments is an exciting area for addressing data privacy concerns. We plan to explore these avenues for improvement in future work.

\end{document}